\def\blfootnote{\gdef\@thefnmark{}\@footnotetext}
\tikzstyle{line} = [draw, -latex']
\tikzstyle{Arrow} = [
\newcommand{\rZ}{Z}
\newcommand{\rY}{Y}
\newcommand{\rX}{\mX}
\newcommand{\rz}{z}
\newcommand{\ry}{y}
\newcommand{\rx}{\xvec}
\newcommand{\pZ}{F_\rZ}
\newcommand{\pY}{F_\rY}
\newcommand{\dZ}{f_\rZ}
\newcommand{\g}{g}
\newcommand{\h}{h}
\newcommand{\hY}{h_\rY}
\newcommand{\basisy}{\avec}
\newcommand{\parm}{\varthetavec}
\newcommand{\eparm}{\vartheta}
\newcommand{\shiftparm}{\betavec}
\newcommand{\ie}{\textit{i.e.,}~}
\newcommand{\eg}{\textit{e.g.,}~}
\newcommand{\cf}{\textit{cf.}~}
\newcommand{\Prob}{\mathbb{P}}
\newcommand{\Ex}{\mathbb{E}}
\newcommand{\RR}{\mathbb{R}}
\newcommand{\I}{\mathds{1}}
 \DeclareMathOperator{\logit}{logit}
 \DeclareMathOperator{\expit}{expit}
 \DeclareMathOperator*{\argmax}{{arg\,max}}
 \DeclareMathOperator{\ND}{N}
\def \avec {\text{\boldmath$a$}}
\def \xvec {\text{\boldmath$x$}}
\def \rX {\text{\boldmath$X$}}
 \def \calD {\mathcal D}
 \def \calF {\mathcal F}
 \def \calG {\mathcal G}
 \def \calP {\mathcal P}
 \def \calY {\mathcal Y}
\def \betavec         {\text{\boldmath$\beta$}}
\def \varthetavec     {\text{\boldmath$\vartheta$}}
\newcommand{\ubar}[1]{\underaccent{\bar}{#1}}
\newcommand{\pkg}[1]{\textbf{#1}}
\newcommand{\proglang}[1]{\textsf{#1}}
\newcommand{\B}{\mathsf{B}}
\newcommand{\dd}{\,\mathrm{d}}
\newcommand{\given}{\,\vert\,}
\newcommand{\vs}{\emph{vs.}~}
\newcommand{\Be}{M}
\newcommand{\be}{m}
\newcommand{\pmem}{F}
\newcommand{\dmem}{f}
\newcommand{\pens}{\bar \pmem_\Be}
\newcommand{\pensi}{\bar \pmem_{\Be,i}}
\newcommand{\pmemb}{\pmem_\be}
\newcommand{\dens}{\bar \dmem_\Be}
\newcommand{\dmemb}{\dmem_\be}
\newcommand{\hb}{\h_\be}
\newcommand{\hens}{\bar \h_\Be}
\newcommand{\ensparm}{\bar\parm_\Be}
\newcommand{\ensshiftparm}{\bar\shiftparm_\Be}
\newcommand{\enscomplex}{\bar\eta_\Be}
\newcommand{\ls}{LS}
\newcommand{\cs}{CS}
\newcommand{\si}{SI}
\newcommand{\ci}{CI}
\newcommand{\lsx}{\ls$_\rx$}
\newcommand{\cib}{\ci$_\B$}
\newcommand{\csb}{\cs$_\B$}
\newcommand{\silsx}{\si-\lsx}
\newcommand{\sicsb}{\si-\csb}
\newcommand{\sicsblsx}{\si-\csb-\lsx}
\newcommand{\ciblsx}{\cib-\lsx}
\DeclareMathOperator{\NLL}{NLL}
\DeclareMathOperator{\RPS}{RPS}
\DeclareMathOperator{\BS}{BS}
\theoremstyle{definition}
\newtheorem{proposition}{Proposition}
\newtheorem*{proposition*}{Proposition}
\newtheorem{definition}{Definition}
\newtheorem{remark}{Remark}
\newtheorem{example}{Example}
\newtheorem{theorem}{Theorem}
\newtheorem{corollary}{Corollary}
\newtheorem*{corollary*}{Corollary}
\title{\bf Deep Interpretable Ensembles%
\blfootnote{Corresponding authors: \texttt{lucasheinrich.kook@uzh.ch, sick@zhaw.ch}%
\newline
Preprint. Version May 25, 2022. Licensed under CC-BY.}} 
\author{%
Lucas Kook\textsuperscript{1,2}, 
Andrea G\"otschi\textsuperscript{1},
Philipp F. M. Baumann\textsuperscript{3},\\
Torsten Hothorn\textsuperscript{1},
Beate Sick\textsuperscript{1,2}
}
\date{%
\footnotesize
\textsuperscript{1}Epidemiology, Biostatistics \& Prevention Institute, 
University of Zurich, CH-8001\\
\textsuperscript{2}Institute for Data Analysis and Process Design,
Zurich University of Applied Sciences, CH-8400\\
\textsuperscript{3}KOF Swiss Economic Institute, ETH Zurich, CH-8092\\
} 
\begin{document}

\maketitle

\begin{abstract}%

Ensembles improve prediction performance and allow uncertainty quantification by
aggregating predictions from multiple models.
In deep ensembling, the individual models are usually black box neural networks,
or recently, partially interpretable semi-structured deep transformation models.
However, interpretability of the ensemble members is generally lost upon
aggregation. This is a crucial drawback of deep ensembles in high-stake
decision fields, in which interpretable models are desired.
We propose a novel transformation ensemble which aggregates probabilistic 
predictions with the guarantee to preserve interpretability and yield uniformly
better predictions than the ensemble members on average.
Transformation ensembles are tailored towards interpretable deep transformation
models but are applicable to a wider range of probabilistic neural networks.
In experiments on several publicly available data sets, we demonstrate that 
transformation ensembles perform on par with classical deep ensembles in terms of
prediction performance, discrimination, and calibration.
In addition, we demonstrate how transformation ensembles quantify both aleatoric
and epistemic uncertainty, and produce minimax optimal predictions under certain 
conditions.

\end{abstract}

\section{Introduction} \label{sec:intro}

The need for interpretable yet flexible and well-performing prediction models is
great in high-stakes decisions fields, such as medicine. Practitioners need 
to be able to understand how a model arrives at its predictions, and how confident 
these predictions are, to assess a model's trustworthiness.
For this purpose, \cite{rudin2018stop} proposes the use of intrinsically interpretable
models. A model is deemed intrinsically interpretable, if it possesses a transparent 
structure, such as sparsity or additivity with parameters that can be interpreted,
\eg as log odds-ratios or log hazard-ratios.
For instance, traditional statistical regression models, \eg linear regression, or the 
Cox proportional hazards model \citep{cox1972regression}, and also decision trees fall in 
this category. However, contemporary applications may involve more complex data or require
methods, for which intrinsic interpretability can only partly be achieved. For models
that are not intrinsically interpretable there exist \emph{post hoc} or model-agnostic 
explanation methods \citep[for an overview, see][]{molnar2020interpretable}.

Nowadays the available data for prediction is often a mix of structured tabular and
unstructured data, such as images, text, or speech. The prediction target can be 
continuous (\eg time-to-event), or discrete (\eg number of recurrences, or an ordinal 
score, such as tumour-grading). Hence, the ideal prediction model should in summary 
(i) be as interpretable as possible, (ii) yield accurate and calibrated probabilistic
predictions, and (iii) be applicable to multi-modal input data and different 
kinds of outcomes. One group of models fulfilling these requirements are deep 
transformation models \citep{sick2020deep,baumann2020deep,kook2020ordinal,rugamer2021timeseries}.
Deep transformation models estimate the conditional distribution of an outcome
given all available data modalities. However, a single instance of such a model may 
suffer from overfitting and overconfident predictions, which may lead to severe 
prediction errors when applied to novel data. 

To build well-performing models, it is common practice to aggregate the predictions 
of several models, mitigating overconfident and potentially inaccurate predictions 
\citep{buhlmann2012bagging}. Aggregating several models is referred to as ensembling,
and the resulting final model is called an ensemble. 
Deep ensembling, \ie aggregating predictions of (often as few as 3--5) deep neural
network that are fitted on the same data but with different random initializations,
has been demonstrated to notably improve prediction performance and uncertainty 
quantification \citep{lakshminarayanan2016deepensembles}.
We refer to this approach as classical deep ensembling. However, classical deep 
ensembles are black boxes even if their members are somewhat interpretable 
individually.

In this paper, we propose transformation ensembles, which preserve structure and
interpretability of their members. In addition, transformation ensembles improve
predictions and allow uncertainty quantification with distributional deep neural 
networks. 

\subsection{Our contribution}
\label{subsec:contrib}

We present transformation ensembles as a novel way to aggregate predicted
cumulative distribution functions (CDFs) derived from deep neural networks with
a special focus on deep transformation models for (potentially) semi-structured
data. We show that transformation ensembles (i) preserve structure and
interpretability of their members, (ii) improve the prediction performance akin
to classical deep ensembles, and (iii)
minimize worst-case prediction error in special cases. We showcase these
properties theoretically and demonstrate the benefits of transformation
ensembles empirically on several semi-structured data sets. With transformation
ensembles we are able to provide empirical evidence for answering open questions
in deep ensembling \citep{abe2022deep}. For instance, the increased flexibility
of classical deep ensembles over their members does not seem to be necessary for 
improving prediction performance or allowing uncertainty quantification.

\paragraph{Structure of this paper}
We discuss how transformation ensembles relate to classical deep ensembles and other
ensembling strategies from the forecasting literature in Section~\ref{subsec:related}.
We give the necessary background on scoring rules, ensembling and transformation models 
in Section~\ref{sec:bg}. Afterwards, we present transformation ensembles 
(Section~\ref{sec:trafoensemble}) and demonstrate how they improve prediction performance 
similar to deep ensembles while preserving interpretability (Section~\ref{sec:results}).
We end with a discussion of our results and potential directions of future research
(Section~\ref{sec:outlook}). Mathematical notation used in this article is summarized
in Appendix~\ref{app:notation}. In further appendices, we provide computational details 
(Appendix~\ref{app:experiments}), and additional theoretical (Appendix~\ref{app:theory})
and empirical (Appendix~\ref{app:figures}) results.

\subsection{Related work}
\label{subsec:related}

We recap ensembling techniques from machine learning, deep learning and ensembles
with a specific focus on proper scoring rules. Note that we focus on probabilistic
prediction in this article, but not point prediction. For completeness we mention 
some ensembles for point prediction, \ie classification and conditional means, below.

\paragraph{Ensembling in machine learning}
Ensembling methods have been used for some decades now in statistics and machine
learning \citep[see][for an overview]{buhlmann2012bagging}. For instance,
\citet{hansen1990neural} aggregated classifications from neural networks with different
initializations by majority vote and \citet{breiman1996bagging} proposed to aggregate 
models or predictions obtained from bootstrap samples (bagging).
In practice, decision trees as base models have been vastly successful and the random 
forest \citep{breiman2001random} is probably the most well-known ensemble algorithm.
The main goal of these ensemble methods is to improve upon the individual members'
prediction performance in terms of both calibration and sharpness, rather than to 
obtain interpretable models \citep[\eg][]{breiman2001statistical}.

\paragraph{Classical deep ensembles}
Classical deep ensembles combine predictions of several deep neural networks, by
training several random instances of the same model on the same data
\citep{lakshminarayanan2016deepensembles}.
In contrast to the bagging algorithms discussed above, heterogeneity of deep ensemble
members does not stem from bootstrapping the data, but rather from stochasticity 
in initializing and optimizing the neural networks.
Several contributions suggest that deep ensembles benefit prediction performance, 
uncertainty quantification and out-of-distribution generalization 
\citep{fort2019deep,wilson2020bayesian,hoffmann2021deep}. 

\citet{abe2022deep} question the extent to which these benefits hold.
For instance, the authors suggest that more complex models may show a gain in
prediction performance similar to classical ensembles. 
Other work \citep{sluijterman2022confident} investigates ensembling conditional 
mean functions instead of probabilistic predictions. However, we do not further address
conditional mean ensembles in this article.

\paragraph{Quasi-arithmetic pooling with proper scoring rules and minimax optimality}
Proper scoring rules \citep{gneiting2007strictly} are a natural choice to evaluate
probabilistic forecasts (see Section~\ref{subsec:scoring} for more detail).
For nominal outcomes, \citet{neyman2021proper} coin the term \emph{quasi-arithmetic pooling},
for aggregating predicted densities, $p_1, \dots, p_\Be$ from $\Be$ models by
\begin{align} \label{eq:qa}
   p_\Be^g = g^{-1} \left(\sum_\be w_\be g \circ p_\be\right),
\end{align}
where $g$ is any continuous, non-decreasing function and $w_\be$ are non-negative weights
summing to one. The authors prove that for nominal outcomes, certain combinations of scoring
rules and ensembling methods are minimax optimal, guaranteeing that the worst-case prediction 
error as measured by the scoring rule is minimized (compared to the average prediction
error). When, for example, evaluating ensemble predictions based on Brier's quadratic score 
\citep[see Appendix~\ref{app:scoringrules} for the definition]{brier1950verification},
aggregating predictions with the arithmetic mean is minimax optimal, corresponding to $g$ 
being the identity. For the negative log-likelihood, a geometric mean aggregation is 
minimax optimal, corresponding to $g$ being the natural logarithm.

\section{Background}\label{sec:bg}

We briefly recap classical ensembles and proper scoring rules. Then, we give a 
short overview of deep conditional transformation models as the backbone of our 
proposed transformation ensembles.

\subsection{Ensembles}
\label{subsec:ensembles}

Ensembles aggregate predictions from multiple models. Here, we focus on 
probabilistic predictions, such as conditional CDFs $\pmem_1, \dots, \pmem_\Be$ 
of $\Be$ models. The most commonly used ensemble methods are the classical linear 
and log-linear ensembles, which we discuss in the following.
\begin{definition}[Classical linear ensemble] \label{def:classical}
Let $\pmem_1, \dots, \pmem_\Be$ be $\Be$ CDFs and $w_1, \dots, w_\Be$
be non-negative weights summing to one.
The classical linear ensemble is defined as the point-wise weighted
average of the $\Be$ ensemble members $\pens^c = \sum_\be w_\be \pmemb$. When
using equal weights, the linear ensemble is equivalent to taking the arithmetic
mean, $\Be^{-1}\sum_\be \pmemb$.
\end{definition}
The ensemble distribution $\pens^c$ can be viewed as a mixture distribution with
weights $w_\be$. Note that for linear ensembles it does not matter whether one 
ensembles on the scale of the CDF or the probability density function (PDF), because
$\dens^c(y) = \sum_\be w_\be \dmemb(y) = \frac{\dd}{\dd y} \pens^c(y)$ for
continuous CDFs.

In this article we formulate all ensembles on the scale of the CDF, if it is 
well-defined (\ie for random variables with an at least ordered sample space).
Note, however, that in \eg multi-class classification, it is common to ensemble 
on the density scale. Here, the predicted probabilities for each class 
$p(k) := \Prob(\rY = \ry_k)$ are aggregated via
$\bar p_\Be(k) = \sum_\be w_\be p_\be(k), \; k = 1, \dots, K$. 
Performing linear ensembling on the density scale with deep neural networks is
called deep ensembling (Section~\ref{subsec:related}).

For convex loss functions, the performance of the classical linear ensemble
is always at least as good as the average performance of its members 
(Prop.~\ref{prop:classical}).
\begin{proposition}[\eg \citet{abe2022deep}] \label{prop:classical}
Let $\pmem_1, \dots, \pmem_\Be$ be CDFs with $w_1, \dots, w_\Be$ 
non-negative weights summing to one. Let $L: \calP \to \RR$ be a convex loss function. 
Then, $L\left(\sum_\be w_\be \pmemb \right) \leq \sum_\be w_\be 
L\left(\pmemb\right)$.
\end{proposition}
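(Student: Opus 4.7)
The statement is essentially Jensen's inequality applied in the space of CDFs, so my plan is to verify that all the ingredients for Jensen are present and then invoke it directly. First I would observe that $\calP$, the space of CDFs under consideration, is a convex set: any convex combination $\sum_\be w_\be \pmemb$ of CDFs with non-negative weights summing to one is again a CDF, since it is non-decreasing, right-continuous, and has the correct limits at $\pm\infty$. Hence the argument on the left-hand side lies in the domain of $L$, and the inequality is meaningful.

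Next, I would proceed by induction on $\Be$, which is the standard way to derive the finite-form Jensen inequality from the two-point definition of convexity. The base case $\Be=1$ is trivial. For the induction step, writing $W := \sum_{\be=1}^{\Be-1} w_\be$, I would decompose
\begin{equation*}
\sum_{\be=1}^{\Be} w_\be \pmemb = W \cdot \Bigl( \sum_{\be=1}^{\Be-1} \tfrac{w_\be}{W} \pmemb \Bigr) + w_\Be \pmem_\Be,
\end{equation*}
(handling the edge case $W = 0$ separately), apply the two-point convexity inequality to the outer combination, and then the induction hypothesis to the inner convex combination of $\Be-1$ CDFs. This yields exactly $L(\sum_\be w_\be \pmemb) \leq \sum_\be w_\be L(\pmemb)$.

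There is essentially no obstacle here, only a bookkeeping subtlety: I must make sure that at each inductive step the intermediate convex combination $\sum_{\be=1}^{\Be-1} (w_\be/W) \pmemb$ is itself an element of $\calP$ so that $L$ can be evaluated on it; this follows from the same observation as above that $\calP$ is convex. Since the authors already attribute the proposition to \citet{abe2022deep} and treat it as a standard fact, I would keep the proof short, mention convexity of $\calP$, and either give the one-line induction or simply cite Jensen's inequality for the convex functional $L$ on the convex set $\calP$ with the discrete probability measure that puts mass $w_\be$ on $\pmem_\be$.
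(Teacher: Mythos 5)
Your proposal is correct and matches the paper's argument, which simply notes that the claim "follows directly from Jensen's inequality by convexity of $L$"; you merely expand the finite-form Jensen inequality via the standard induction on $\Be$ and verify that a convex combination of CDFs is again a CDF. No substantive difference.
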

The claim follows directly from Jensen's inequality by convexity of $L$. 
In particular, this holds for the negative log-likelihood (NLL) and the
ranked probability score (RPS) as loss functions (for definitions, see
Section~\ref{subsec:scoring}).

The arithmetic mean is not the only way to aggregate forecasts. The 
geometric mean (arithmetic mean on the log-scale) is used for log-linear 
ensembles, as defined in the following.
\begin{definition}[Classical log-linear ensemble] \label{def:loglin}
The log-linear ensemble is defined as the point-wise geometric mean of the 
$\Be$ ensemble members $\pens^l = \exp\left(\sum_\be w_\be \log\pmemb\right)$.
\end{definition}
The log-linear ensemble, as defined here, is a special case of quasi-arithmetic
pooling with $g = \log$ on the scale of the CDF, see eq.~\eqref{eq:qa}. 
As mentioned above, commonly (\ie in classification problems with nominal outcomes) 
densities are aggregated in log-linear ensembles which requires scaling by a constant 
such that the ensemble density integrates to one. Regardless of whether log PDFs or
log CDFs are pooled, the ensemble will still score better in terms of negative
log-likelihood than its members do on average (see Appendix~\ref{app:theory}).

\subsection{Scoring rules}
\label{subsec:scoring}

Scoring rules are metrics designed to evaluate probabilistic predictions 
\citep{gneiting2007strictly}. An ideal score judges predictions from a model 
by how faithful they are to the data-generating distribution and thus penalize
overconfident as well as too uncertain predictions. Scoring rules can be 
categorized as to whether they adhere to this ideal (such a score is called proper,
Def.~\ref{def:proper}). First, we define scoring rules.
\begin{definition}[Scoring rule, \citet{gneiting2007strictly}] 
\label{def:scoringrule}
Let $P \in \calP$ be a probability distribution corresponding to a random 
variable with sample space $\calY$. Then, a scoring rule is an extended 
real-valued function $s: \calP \times \calY \to \RR \cup \{-\infty, \infty\}$,
and $s(P, y)$ is the prediction error when predicting $P$ and observing $y$.
\end{definition}
In this article, we take scoring rules to be negatively oriented (\ie smaller 
is better), which is the more natural choice in deep learning because model
training is usually phrased as a minimization problem.
We now define proper scoring rules.
\begin{definition}[Proper scoring rule, \citet{gneiting2007strictly}] 
\label{def:proper}
Let $P, Q \in \calP$ be two probability distributions and $\rY \sim P$.
The scoring rule $s$ is proper if $\Ex_P[s(P, \rY)] \leq \Ex_P[s(Q, \rY)]$,
and strictly proper if equality holds iff $P = Q$. 
\end{definition}
\begin{example}[Brier score, \citet{brier1950verification}]
When predicting a binary outcome $\rY \sim \operatorname{Bern}(\pi)$,
we use the Brier score, $\BS(p, \ry) = (y - p)^2$, to score a prediction $p$.
The expected Brier score is minimized when predicting $p = \pi$, hence the 
Brier score is proper.
\end{example}
Other scoring rules used in this paper are the log-score \citep{good1952rational}
and the ranked probability score \citep[RPS,][see appendix \ref{app:scoringrules}
for more detail]{epstein1969scoring}. The log-score is strictly proper and local 
and defined as
\begin{align}
    \NLL(f, \ry) = - \log f(\ry),
\end{align}
where $f$ is the predicted density of $\rY$. In case an observation is censored 
(\ie set-valued), the NLL is given by 
$\NLL(f, (\ubar\ry, \bar\ry]) = - \log\left(F(\bar\ry) - F(\ubar\ry)\right)$, where $F$
denotes the CDF corresponding to $f$, provided it exists. The log-score is equivalent to 
the negative log-likelihood \footnote{In fact, for continuous random variables, the 
log-score is the only smooth, proper and local score \citep{brocker2007scoring}.}.
Locality refers to $s$ solely being evaluated at the observed $y$ instead of the whole
sample space. The RPS is a global (\ie non-local) proper scoring rule for ordinal outcomes 
$\rY \in \{\ry_1 < \dots < \ry_K\}$ and defined as
\begin{align} \label{eq:rps}
   \RPS(F, \ry) = \frac{1}{K - 1} \sum_{k=1}^{K-1} \left(F(\ry_k) - 
       \mathds{1}(\ry \leq y_k) \right)^2,
\end{align}
where $F$ is the predicted CDF for $\rY$. The RPS can be viewed as a sum of 
cumulative Brier scores for each category, highlighting that it is a global score.

When training a model on data $\{(\ry_i, \rx_i)\}_{i=1}^n$ and the goal is 
probabilistic prediction, it is advantageous to minimize an empirical proper score, 
$\bar s_n = \frac{1}{n} \sum_{i = 1}^n s(P_i, y_i)$, 
as, for instance, in maximum likelihood \citep{fisher1922mathematical} or continuous RPS 
learning \citep{gneiting2005calibrated,berrisch2021crpslearning}.

\subsection{Deep conditional transformation models}
\label{subsec:ontram}

Although our proposed ensembling scheme (see Def.~\ref{def:trafoensemble}) is applicable
to any probabilistic deep neural network, it is most beneficial when all members are deep 
conditional transformation models. Therefore, we briefly recap transformation 
models \citep{hothorn2014conditional} and their extensions involving deep neural networks.
Transformation models generalize and extend several well-known models, such as the normal 
linear regression model, cumulative ordinal regression, and the Cox proportional hazards 
model \citep{hothorn2018most}. In their general form, transformation models are 
distribution-free \citep{bell1964characterization}. That is, no predefined conditional 
outcome distribution has to be chosen.

Transformation models construct the conditional CDF of an outcome $\rY$ 
given data $\calD$ as follows. The outcome variable $\rY$ is passed through a
transformation function $\h$, which is estimated from the data, such that the 
transformed outcome variable $\h(\rY \given \calD)$ follows a parameter-free 
target CDF $\pZ$ (see Fig.~\ref{fig:tram}), 
\begin{align} \label{eq:tram}
    \pmem(\ry \given \calD) = \pZ(\h(\ry \given \calD)).
\end{align}
\begin{figure}[!ht]
    \centering
    \resizebox{\textwidth}{!}{
    \begin{tikzpicture}
    \node (cont) {\includegraphics[width=0.49\textwidth]{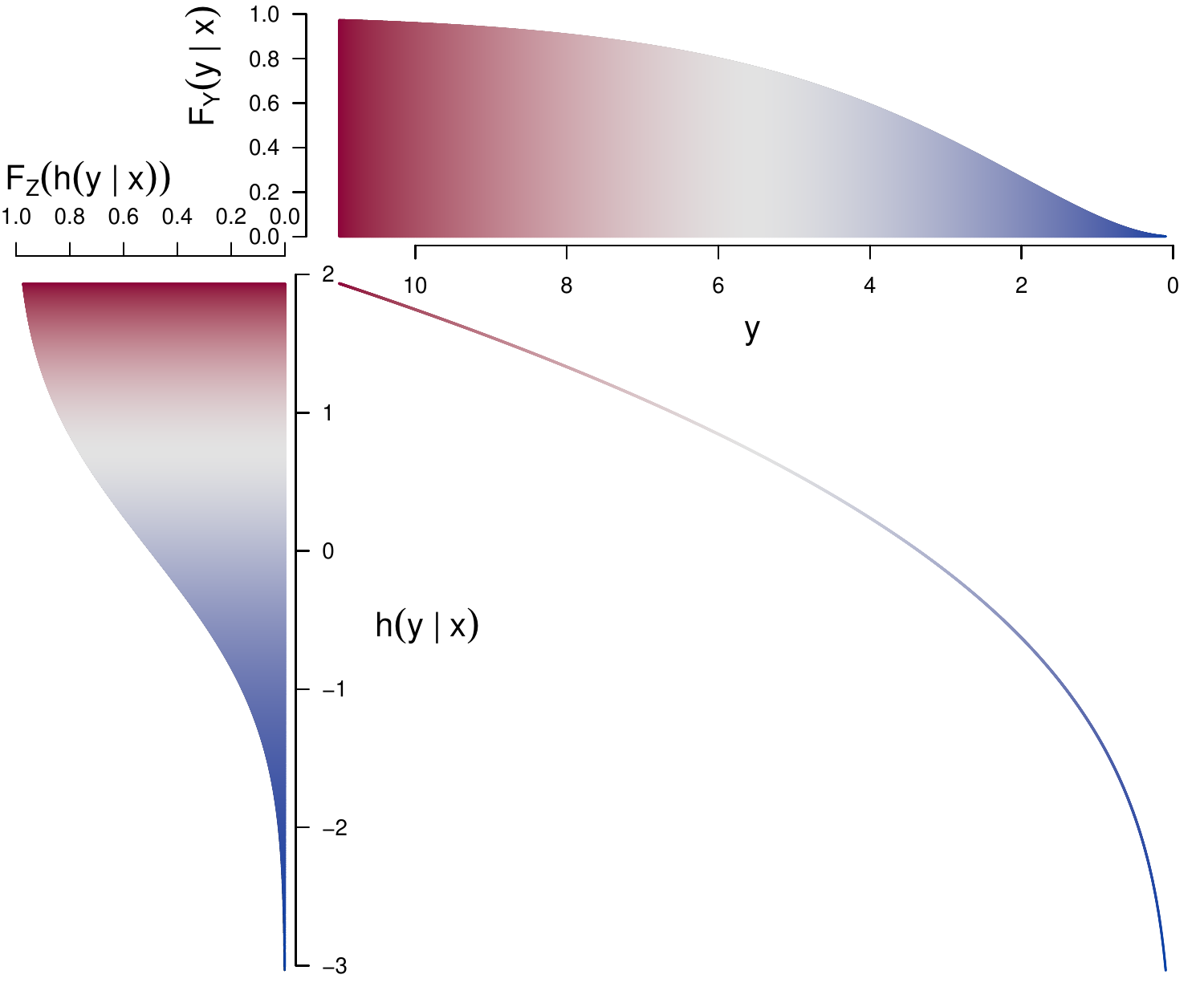}};
    \node[above=0cm of cont.north west] (a) {\textsf{A}};
    \node[right=0cm of cont] (ord) {\includegraphics[width=0.49\textwidth]{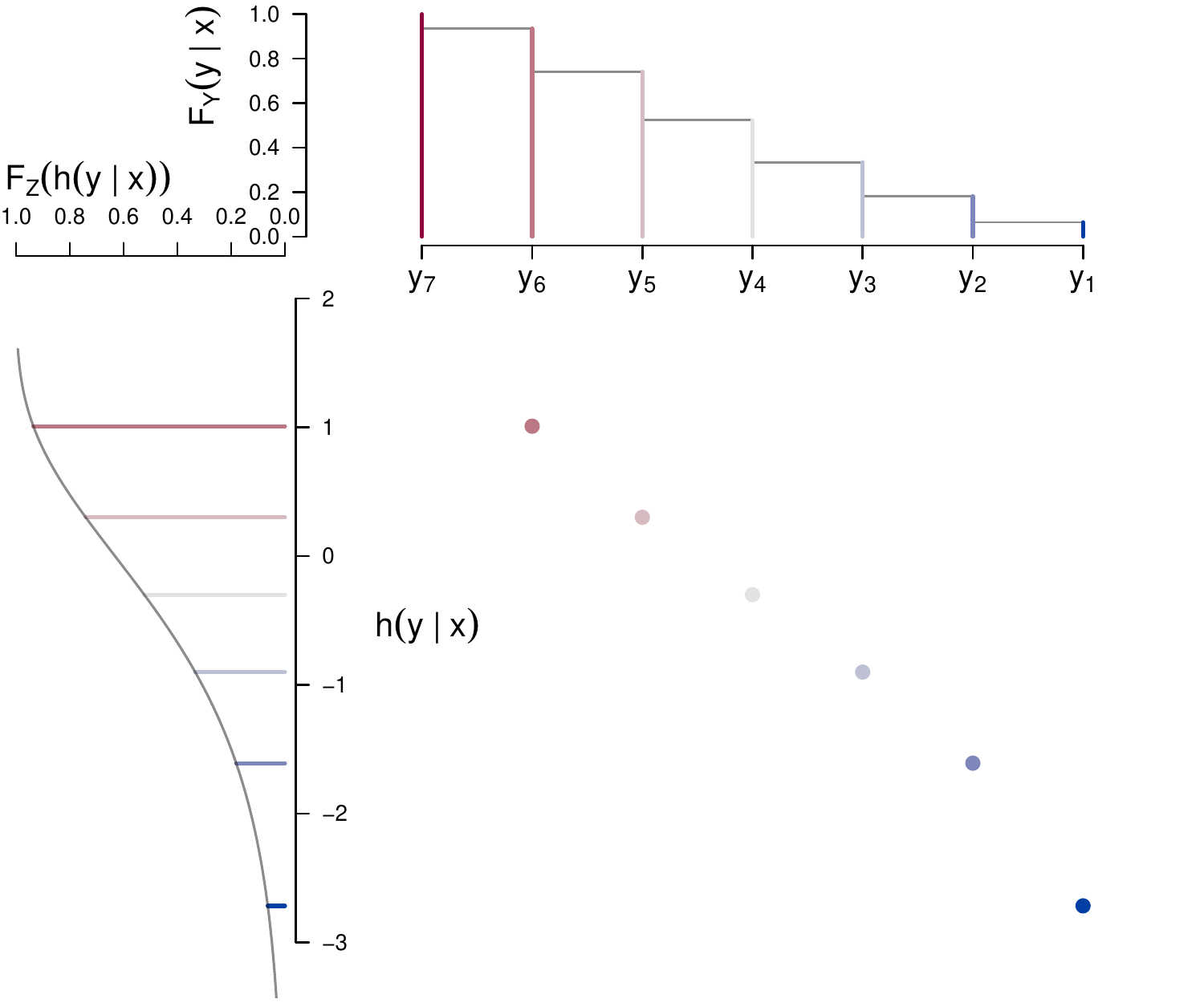}};
    \node[above=0cm of ord.north west] (b) {\textsf{B}}; 
    \end{tikzpicture}}
    \caption{Transformation models for continuous (\textsf{A}) and ordinal 
    (\textsf{B}) outcomes are constructed such that
    $\Prob(\rY \leq \ry \given \rx) =
    \Prob(\h(\rY \given \rx) \leq \h(\ry \given \rx))$ holds.
    The monotone increasing continuous or discrete transformation function 
    (lower right panel in \textsf{A} and \textsf{B}) is 
    estimated from the data by minimizing an empirical proper score.}
    \label{fig:tram}
\end{figure}
In transformation models, $\pZ$ is required to be a CDF with log-concave density and 
the transformation function $\h$ needs to be a monotone non-decreasing function in $\ry$.
Common choices for $\pZ$ are the standard normal, standard logistic, and standard minimum 
extreme value distribution. Transformation models are closely related to normalizing flows 
in deep learning \citep{papamakarios2021normalizing}.

\citet{sick2020deep} demonstrate how transformation models for continuous outcomes can 
be set up with neural networks to handle tabular and/or image data, however the authors 
focused solely on prediction power and not on interpretability. \citet{baumann2020deep}
and  Kook \& Herzog et al. (\citeyear{kook2020ordinal}) explore intrinsic interpretability
of deep transformation models. In the latter, the authors propose models for semi-structured
input data and ordinal outcomes ({\sc ontram}s).
In {\sc ontram}s the transformation function is parameterized via a neural network and 
the model parameters are optimized jointly by minimizing the NLL.
This way of combining traditional statistical methods with deep learning allows 
the construction of interpretable, yet powerful prediction models. For instance, having 
access to tabular data $\rx$ and image data $\B$, we can formulate models for the 
conditional distribution of $\rY$, such as
\begin{align*}
    \pY(\ry \given \rx, \B) &= \pZ\left(
    \hY(\ry) - \rx^\top\shiftparm - \eta(\B) \right), \quad \text{or} \\ 
    \pY(\ry \given \rx, \B) &= \pZ\left(
    \hY(\ry \given \B) - \rx^\top\shiftparm\right).
\end{align*}
The first model assumes a simple linear shift ($\rx^\top\shiftparm$) for the tabular 
data and a complex shift ($\eta(\B)$) for 
the image data, while the second model allows full flexibility in the image data, but 
still assumes linear shift effects for the tabular data. When using 
$\pZ(\rz) = \expit(\rz) = (1 + \exp(-\rz))^{-1}$, 
the shift terms can be interpreted as log odds-ratios. Then the first model assumes 
proportional odds for both $\rx$ and $\B$. The second model lifts this restriction 
on the image component. The components of the transformation function are
controlled by (deep) neural networks, \ie a convolutional neural network for $\B$ and a
single-layer NN for $\rx$ (for details see Kook \& Herzog et al. \citeyear{kook2020ordinal}).

\section{Transformation ensembles} \label{sec:trafoensemble}

Transformation models can be used as flexible, yet interpretable prediction models
with semi-structured data, making them an attractive choice of model class.
To further improve their prediction power, one could use classical deep ensembling.
However, when pooling transformation models linearly, the ensemble looses the structural
assumptions of its individual members (\eg proportional odds), and thus intrinsic 
interpretability is lost in general (see Fig.~\ref{fig:spaces} \textsf{A}). For instance, 
the average of two Gaussian densities is generally not Gaussian anymore and neither 
unimodal nor necessarily symmetric. 
To improve upon the black-box character of classical ensembles, we propose 
\emph{transformation ensembles}. Transformation ensembles are specifically tailored
towards transformation models for which predictions are aggregated on the scale of 
the transformation function $\h$. That is, the transformation ensemble with members 
$\pmemb = \pZ\circ\hb$ is given by $\pens^t = \pZ \left(\sum_\be w_\be \hb\right)$.
\begin{definition}[Transformation ensemble] \label{def:trafoensemble}
Let $\pmem_1, \dots, \pmem_\Be$ be CDFs and $w_1, \dots, w_\Be$ be non-negative weights
summing to one. Let $\pZ : \RR \to [0, 1]$ be a continuous CDF with quantile function 
$\pZ^{-1}$ and log-concave density. The transformation ensemble is defined as
\begin{align} \label{eq:trafoquap}
  \pens^t = \pZ \left(\sum_\be w_\be \pZ^{-1} \circ \pmemb\right).
\end{align}
Further, if each member is a transformation model, \ie $\pmemb = \pZ \circ \hb$,
the transformation ensemble simplifies to $\pens^t = \pZ \left(\sum_\be w_\be \hb\right)$.
For continuous outcomes, the transformation ensemble density is given by the transformation
model density evaluated at the average transformation function, \ie $\dZ(\hens(\ry))\hens'(\ry)$,
where $\hens = \sum_\be w_\be \hb$ and $\hens'(\ry) = \frac{\dd}{\dd\ry} \hens(\ry)$.
\end{definition}
Note that transformation ensembles require only the existence of the CDF of each 
ensemble member. These CDFs may stem from any deep neural network, which need not 
necessarily be a transformation model. A transformation ensemble can still be
constructed after picking any continuous CDF $\pZ$.

In the following, we show that transformation ensembles remain as intrinsically 
interpretable as their members, while still producing provably better than average 
predictions, in case their members are transformation models.
\begin{proposition} \label{prop:closure}
The family of transformation models, 
$\calF = \{\pY(\ry \given \calD) = \pZ(\h(\ry\given\calD)) : \h \mbox{ monotone increasing in } \ry\}$, 
is closed under transformation ensembling. That is, if $\pmemb \in \calF$ for all $\be$,
then also $\pens^t \in \calF$. 
\end{proposition}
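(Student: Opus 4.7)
The plan is to verify directly from Definition~\ref{def:trafoensemble} that $\pens^t$ has exactly the form required by membership in $\calF$. Since each $\pmemb \in \calF$ by hypothesis, we may write $\pmemb = \pZ \circ \hb$ with $\hb$ monotone non-decreasing in $\ry$, so that $\pZ^{-1} \circ \pmemb = \hb$. Substituting into \eqref{eq:trafoquap} immediately gives $\pens^t = \pZ(\hens)$ with $\hens := \sum_\be w_\be \hb$, which is the candidate representation of $\pens^t$ as a transformation model with the same reference distribution $\pZ$.

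The only remaining obligation is to check the monotonicity side-condition in the definition of $\calF$, namely that $\hens$ is monotone non-decreasing in $\ry$. This follows because a non-negative linear combination of monotone non-decreasing functions is monotone non-decreasing: for $\ry_1 \leq \ry_2$, one has $\hb(\ry_1 \given \calD) \leq \hb(\ry_2 \given \calD)$ for every $\be$, and multiplying by $w_\be \geq 0$ and summing preserves the inequality. Hence $\hens$ is a valid transformation function and $\pens^t = \pZ \circ \hens \in \calF$.

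There is essentially no hard step here; the claim is a near-tautology once Definition~\ref{def:trafoensemble} is unpacked, and the only thing to be pedantic about is that the weights are non-negative (ruling out sign cancellations that could destroy monotonicity) and that all members share the \emph{same} target distribution $\pZ$, so that $\pZ^{-1}$ exactly undoes the outer $\pZ$ on each member. Both are part of the standing assumptions in Definition~\ref{def:trafoensemble}, so no additional work is required.
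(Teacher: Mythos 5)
Your proof is correct and is exactly the argument the paper has in mind: the paper simply asserts that the result ``follows directly from Def.~\ref{def:trafoensemble},'' and your write-up spells out the two implicit steps (substituting $\pZ^{-1}\circ\pmemb = \hb$ into eq.~\eqref{eq:trafoquap}, and noting that a convex combination of monotone transformation functions is again monotone). Your pedantic remarks about the non-negativity of the weights and the shared reference distribution $\pZ$ are precisely the right side-conditions to flag.
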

This result follows directly from Def.~\ref{def:trafoensemble} and is illustrated 
in Fig.~\ref{fig:spaces}. Albeit straightforward, Prop.~\ref{prop:closure} has important
consequences for the interpretability of transformation ensembles. The result is especially
interesting for more special cases, such as linear and semi-structured deep transformation
models, as illustrated in the following example.
\begin{example}[Semi-structured deep transformation models]
The model $\pmemb = \pZ \circ \hb$ with $\pZ(\rz) = 1 - \exp(-\exp(\rz))$ being 
the standard minimum extreme value CDF and
$\hb = \basisy(\ry)^\top\parm_\be - \rx^\top\shiftparm_\be - \eta_\be(\B)$,
assumes proportional hazards for both structured ($\rx$) and unstructured
data ($\B$). The resulting transformation ensemble retains the proportional
hazard assumption, because 
$\hens(\ry \given \rx, \B) = \basisy(\ry)^\top\ensparm - \rx^\top\ensshiftparm 
- \enscomplex(\B)$ averages the predicted log cumulative-hazards. In contrast,
random survival forests \citep{ishwaran2008random} aggregate cumulative hazards 
without focusing on interpretability.
Note that the outputs of the convolutional neural networks, $\eta_\be(\B)$, are 
averaged, not their weights. The classical linear ensemble would not preserve 
the model structure, because $\sum_\be w_\be \pZ(\hb(\ry \given \rx, \B)) \neq 
\pZ\left(\sum_\be w_\be \hb(\ry \given \rx, \B)\right)$,
in general.
\end{example}
%
\begin{figure}[!ht]
    \centering
    \resizebox{0.49\textwidth}{!}{%
    \Large
    \begin{tikzpicture}
        \draw node[anchor=west] at (-2.5, 4) {\textsf{A} {\normalsize\hspace{0.5cm} 
        Classical ensembles}};
        \draw[smooth cycle, tension=0.4, fill=white, pattern color=purple!30, 
        pattern=north east lines, opacity=0.9] plot coordinates{(2,3) 
        (-1.5,0) (3,-4) (7,1)} node at (5,2.8) {$\calG$};
        \draw[smooth cycle, tension=0.4, fill=white, pattern color=brown!40, 
        pattern=north west lines, opacity=0.9] plot coordinates{(2,2) 
        (-0.5,0) (3,-2) (5,1)} node at (3,2.3) {$\calF$};
        \draw node at (1, 1) {$\boldsymbol\cdot \pmem_1$};
        \draw node at (2.5, 1.5) {$\boldsymbol\cdot \pmem_2$};
        \draw node at (2.0, 0.5) {$\boldsymbol\cdot \pmem_3$};
        \draw[color=red] node at (1.5, 2.3) {$\boldsymbol\cdot \pens^c$};
    \end{tikzpicture}}
    \resizebox{0.49\textwidth}{!}{%
    \Large
    \begin{tikzpicture}
        \draw node[anchor=west] at (-2.5, 4) {\textsf{B} {\normalsize\hspace{0.5cm}
        Transformation ensembles}};
        \draw[smooth cycle, tension=0.4, fill=white, pattern color=purple!30, 
        pattern=north east lines, opacity=0.9] plot coordinates{(2,3) 
        (-1.5,0) (3,-4) (7,1)} node at (5,2.8) {$\calG$};
        \draw[smooth cycle, tension=0.4, fill=white, pattern color=brown!40, 
        pattern=north west lines, opacity=0.9] plot coordinates{(2,2) 
        (-0.5,0) (3,-2) (5,1)} node at (3,2.3) {$\calF$};
        \draw node at (1, 1) {$\boldsymbol\cdot \pmem_1$};
        \draw node at (2.5, 0.5) {$\boldsymbol\cdot \pmem_2$};
        \draw node at (1.5, 0.1) {$\boldsymbol\cdot \pmem_3$};
        \draw[color=red] node at (1.9, 1.0) {$\boldsymbol\cdot \pens^t$};
    \end{tikzpicture}}
    \caption{Comparing classical and transformation ensembles. \textsf{A}: In
    classical ensembles, structural assumptions on the individual $\pmemb \in \calF$
    do not necessarily carry over to the ensemble $\pens^c \in \calG \supset \calF$.
    Note, that there exist special cases in which the classical ensemble may remain
    in $\calF$, for instance if all ensemble members are equal, 
    $\pmemb = \pmem,\;\forall\be$.
    \textsf{B}: The transformation ensemble, on the other hand, ensures that
    $\forall \be \; \pmemb \in \calF \implies \pens^t \in \calF$.}
    \label{fig:spaces}
\end{figure}
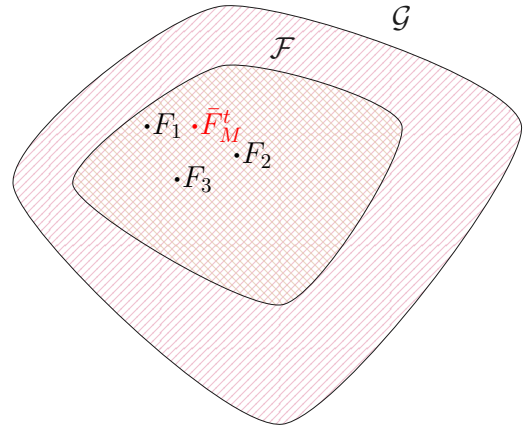

In Prop.~\ref{prop:interpretable} below, we show that the NLL of the transformation 
ensemble is uniformly better than the average NLL of its members. This is analogous 
to the improved prediction performance of classical linear and log-linear ensembles 
w.r.t.~the NLL. A proof is given in Appendix~\ref{app:theory}.  
\begin{proposition} \label{prop:interpretable}
Let $\pZ \circ \h_1, \dots, \pZ \circ \h_\Be$ be transformation model CDFs and
$w_1, \dots, w_\Be$ be non-negative weights summing to one. Let $\NLL: \calP \to \RR$
denote the negative log-likelihood induced by the transformation model $\pZ \circ \h$.
Then, $\NLL\left(\pZ(\sum_\be w_\be \hb)\right) \leq 
\sum w_\be \NLL\left(\pZ\circ\hb \right)$.
\end{proposition}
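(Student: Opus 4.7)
The plan is to prove the inequality pointwise for every observation $\ry$ (or censored interval $(\ubar\ry, \bar\ry]$), which by linearity yields the same inequality for the empirical NLL averaged over a data set. The two ingredients are Jensen's inequality and the log-concavity of $\dZ$ assumed in every transformation model. I would split the argument according to the form of the likelihood contribution: continuous densities versus interval-valued contributions arising from discrete or censored outcomes.

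For a continuous observation $\ry$, the transformation ensemble density is $\dens^t(\ry) = \dZ(\hens(\ry))\,\hens'(\ry)$ by Def.~\ref{def:trafoensemble}, so
\begin{align*}
\NLL(\pens^t, \ry) = -\log \dZ(\hens(\ry)) \;-\; \log \hens'(\ry).
\end{align*}
Since $\hens(\ry) = \sum_\be w_\be \hb(\ry)$ and $\hens'(\ry) = \sum_\be w_\be \hb'(\ry)$ are convex combinations, Jensen's inequality applies to each term separately: log-concavity of $\dZ$ makes $-\log \dZ$ convex and controls the first summand, while concavity of $\log$ on $(0,\infty)$ controls the second. Adding the two bounds yields $\NLL(\pens^t, \ry) \leq \sum_\be w_\be \NLL(\pZ \circ \hb, \ry)$.

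For an interval-valued observation $(\ubar\ry, \bar\ry]$ the relevant object is $G(a,b) := \pZ(b) - \pZ(a)$ on $\{a \leq b\}$, and the NLL contribution is $-\log G(\hens(\ubar\ry), \hens(\bar\ry))$. I would verify that $G$ is \emph{jointly} log-concave by writing $G(a,b) = \int \dZ(t)\,\I\{a \leq t \leq b\}\,\dd t$; the integrand is log-concave in $(t,a,b)$ as the product of the log-concave density $\dZ(t)$ and the indicator of a convex set, so Pr\'ekopa's theorem on marginals of log-concave functions delivers log-concavity of $G$. Since $(\hens(\ubar\ry), \hens(\bar\ry))$ is the convex combination with weights $w_\be$ of the pairs $(\hb(\ubar\ry), \hb(\bar\ry))$, Jensen applied to the convex function $-\log G$ gives the per-observation bound.

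The main obstacle I anticipate is rigorously establishing the joint log-concavity of $G$; this is where log-concavity of $\dZ$ is used in its full strength, beyond mere log-concavity of the CDF $\pZ$, and it deserves an explicit citation to Pr\'ekopa's theorem. Once that is in place, the remainder of the proof is a direct application of Jensen to each piece of the NLL, and summing over observations recovers the stated bound.
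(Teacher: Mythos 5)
Your proof is correct, and for the continuous case it is exactly the paper's argument: split $-\log\dens^t(\ry) = -\log \dZ(\hens(\ry)) - \log\hens'(\ry)$ and apply Jensen to each term, using log-concavity of $\dZ$ for the first and concavity of $\log$ for the second. Where you diverge is the interval-valued (censored/discrete) contribution. The paper simply asserts that $-\log\bigl[\pZ(\sum_\be w_\be\hb(\bar\ry)) - \pZ(\sum_\be w_\be\hb(\ubar\ry))\bigr]$ is convex in the $\hb$ ``because the transformation function is monotone non-decreasing and $\ubar\ry < \bar\ry$,'' which as stated does not identify the actual reason: convexity here is a property of the map $(a,b)\mapsto -\log(\pZ(b)-\pZ(a))$ and hinges on log-concavity of the density $\dZ$, not on monotonicity alone. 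Your route — writing $\pZ(b)-\pZ(a)=\int \dZ(t)\,\I\{a\le t\le b\}\,\dd t$ and invoking Pr\'ekopa's theorem to get joint log-concavity on $\{a\le b\}$, then applying Jensen to the convex combination of the pairs $(\hb(\ubar\ry),\hb(\bar\ry))$ — supplies precisely the missing justification and is the standard rigorous way to establish it. Monotonicity of the $\hb$ still matters, but only to ensure each pair lies in the convex domain $\{a\le b\}$; you use it for exactly that. So your proof is, if anything, more complete than the paper's on this point, at the cost of citing one external result.
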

\subsection{Comparison between transformation and classical ensembles}
\label{subsec:comparison}

We juxtapose classical (linear and log-linear) and transformation ensembles in 
terms of their key features, interpretability, prediction performance and 
uncertainty quantification. In addition, we present minimax optimality results 
for a subset of transformation ensembles.

\paragraph{Interpretability and prediction performance}
Fig.~\ref{fig:ens} shows the resulting linear ensemble, log-linear ensemble and
transformation ensemble of five logistic densities. All three ensembles achieve 
uniformly better negative log-likelihoods (Fig.~\ref{fig:ens} \textsf{B}) than 
the ensemble members do on average (solid black line). The linear ensemble 
has thicker tails and thus is more uncertain than the log-linear and transformation 
ensembles (Fig.~\ref{fig:ens} \textsf{A}). In addition, the linear ensemble does no
longer resemble a logistic distribution, while the transformation ensemble preserves 
the distribution family. The transformation ensemble is similar to the log-linear 
ensemble in this example and both achieve a better NLL for $\ry$ close to zero.
\begin{figure}[!ht]
    \centering
    \includegraphics[width=0.97\textwidth]{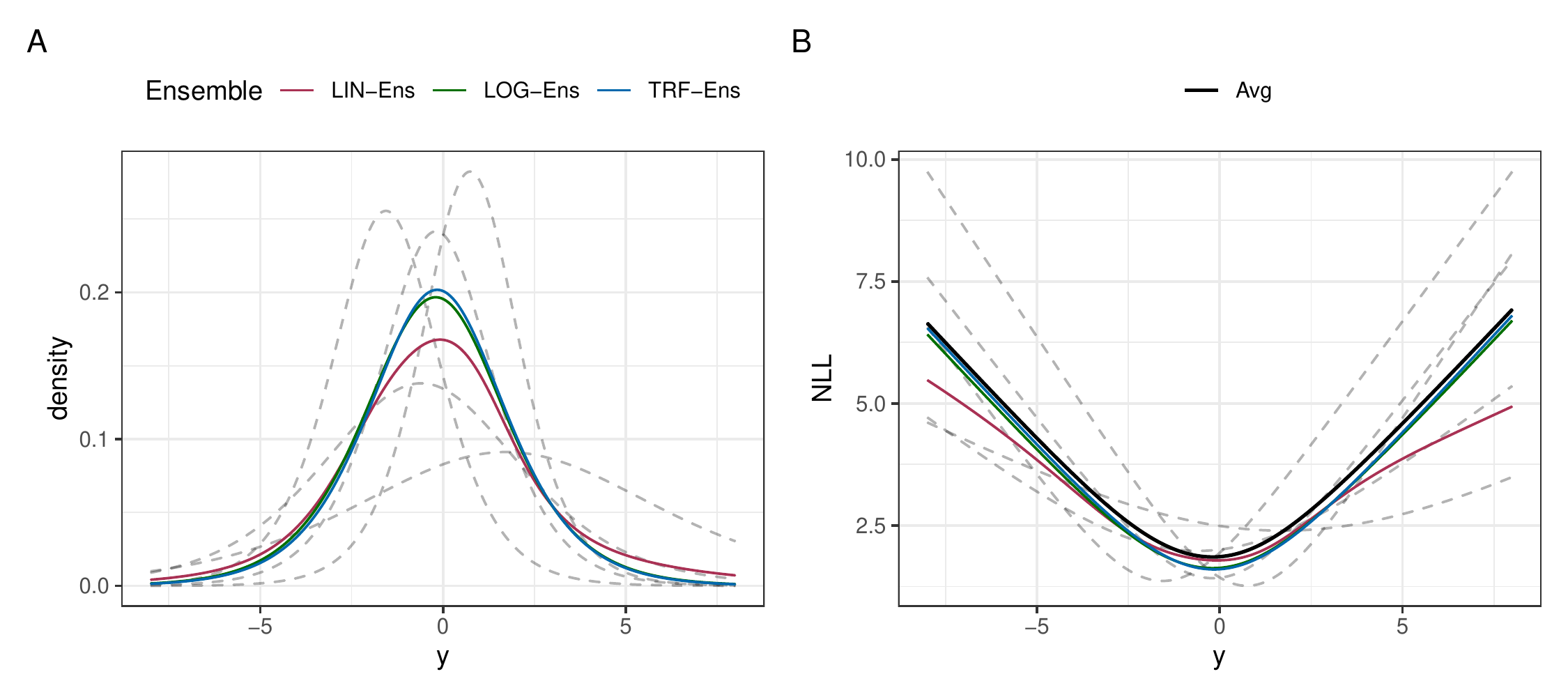}
    \caption{Illustration of linear (LIN-Ens), log-linear (LOG-Ens), and 
    transformation (TRF-Ens) ensemble densities (\textsf{A}) and negative 
    log-densities (\textsf{B}). Five logistic densities (dashed lines) are 
    aggregated into an ensemble using the approaches outlined above.}
    \label{fig:ens}
\end{figure}

An important question is whether increasing flexibility of the model class in 
classical ensembling is necessary for the commonly observed benefits in 
prediction performance. If the increased model complexity was necessary, 
transformation ensembles would not be able to compete with the classical ensembles,
because they remain in the same model class as their members. 
In Section~\ref{sec:results} we present empirical evidence that the increased
complexity is not necessary. All types of ensembles show considerable
improvement over the average performance of their members, but all types of
ensemble perform roughly on par.

Even when using classical ensembles with transformation models as members, we can 
use the transformation ensemble approach to judge whether the ensemble deviates 
from the structural assumptions of its members. To do so, we transform the $\Be$ 
predicted CDFs to the scale of the empirical transformation function, 
$\{\hb(\ry_i \given \rx_i) = \pZ^{-1}\circ \pmemb(\ry_i \given \rx_i)\}_{i=1}^{n}$,
and plot those against the likewise transformed ensemble predictions
$\{\pZ^{-1}\circ\pens(\ry_i \given \rx_i)\}_{i=1}^n$, where $\pens(\ry_i \given \rx_i)$
is for example the classical linear ensemble (see Fig.~\ref{fig:trafos} \textsf{A}).
Observations close to the diagonal indicate that the ensemble does not deviate
far from the model class of its members. In this example, notable deviations 
from the diagonal can be observed for smaller values of $\ry$, indicating the
classical ensemble deviates from the model class of the members, which assumes 
additivity on the probit scale.
\begin{figure}[!ht]
    \centering
    \includegraphics[width=0.97\textwidth]{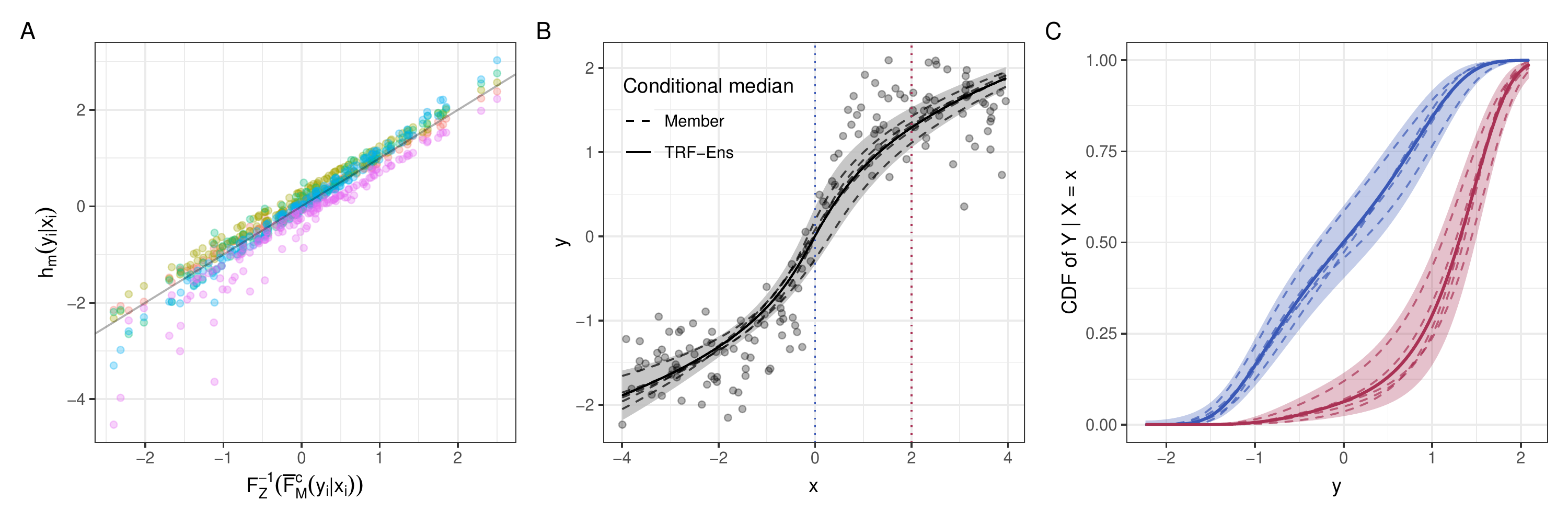}
    \caption{Transformation ensemble approach for checking model structure 
    and quantifying uncertainty. \textsf{A}: Visual comparison of how 
    far an ensemble deviates from the model class of its members on the
    scale of $\pZ^{-1}$, in this case the probit scale. \textsf{B}: 
    Epistemic uncertainty of the transformation ensemble (TRF-Ens) in 
    predicting the conditional median of $\rY \given x$.
    \textsf{C}: Epistemic uncertainty in the conditional distribution of 
    $\rY \given X$ is displayed for $X=0$ (blue) and $X=2$ (red). Individual 
    members are shown as gray dashed
    lines, and the transformation ensemble CDF with point-wise epistemic uncertainty 
    as solid black lines. The data to produce this figure were simulated from the 
    model $Y = 3 \expit(3x) + \epsilon$, where $\epsilon \sim \ND(0, 0.6^2)$.}
    \label{fig:trafos}
\end{figure}

\paragraph{Uncertainty quantification}
We now pose the related question of whether increasing model complexity via classical 
ensembling is necessary for the frequently noted improvement in epistemic 
uncertainty quantification in the classical ensemble. Transformation ensembles yield 
epistemic uncertainty estimates on the scale of the transformation function.
Epistemic uncertainty can be displayed, for instance, when predicting the conditional
$\alpha$-quantile, $\bar q_\Be(\rx;\alpha) := \hens^{-1}(\pZ^{-1}(\alpha) \given \rx)$.
In Fig.~\ref{fig:trafos} \textsf{B}, we show the conditional median, \ie $\alpha = 0.5$,
as a point estimate along with the point-wise standard deviation
$\bar q_\Be(\rx; 0.5) \pm 2 \operatorname{sd}(q_\be(\rx; 0.5))$
as a shaded area in the scatter plot of $\ry$ versus $\rx$.
For the prediction at a single $\rx$, the epistemic uncertainty can also be transformed 
back to the CDF scale (Fig.~\ref{fig:trafos} \textsf{C}). Here, the epistemic uncertainty 
is depicted as $\pZ(\hens \pm 2\operatorname{sd}(\hb))$, where $\operatorname{sd}(\hb)$ 
denotes the point-wise standard deviation of the predicted transformation functions $\hb$.
Besides uncertainty in function space (transformation function, CDF, conditional median),
transformation ensembles also yield epistemic uncertainty for the parameter space of additive
linear predictors. For instance, if the model includes $\rx^\top\shiftparm$, one can 
compute the epistemic uncertainty for each $\beta_p$, $p = 1, \dots, P$ using 
asymptotic or bootstrap confidence intervals.

A major advantage of ensembling distributional regression models which are fitted 
using a loss derived from a proper score is that the members already quantify 
aleatoric uncertainty. Calibration of single models assesses how well the model 
quantifies aleatoric uncertainty. A model is (strongly) calibrated if its predictions 
match the population frequency of events, \ie if
$\Prob(\rY \leq \ry \given \hat\pmem) = \hat\pmem(\ry)$. Calibration of ensembles
assess overall uncertainty, \ie both types of uncertainty, aleatoric and epistemic, 
simultaneously.

\paragraph{Minimax optimality}
Commonly practitioners aim for the best performing model. However, in many applications
optimal worst-case prediction errors are sought 
\citep[\eg in robust statistics,][]{huber2004robust}. 
For instance, in forecasting extreme weather events practitioners may be willing to
sacrifice average prediction performance to mitigate worst-case prediction errors
\citep{gumbel1958statistics}.
There is a strong similarity between transformation ensembles in eq.~\eqref{eq:trafoquap} 
and quasi-arithmetic pooling in eq.~\eqref{eq:qa}, for which minimax properties were 
shown for nominal outcomes \citep{neyman2021proper}. While quasi-arithmetic pooling is
defined for densities of nominal outcomes, transformation ensembles act on the CDF of 
outcomes with an ordered sample space. However, for the special case of binary outcomes
both aggregation methods coincide. Hence, for binary outcomes transformation ensembles 
are guaranteed to minimize worst-case prediction error in terms of NLL. The result 
follows from Theorem~4.1 in \citet{neyman2021proper}.
\begin{corollary}[Minimax optimality of transformation ensembles] \label{thm:minimax}
Let $p_1, \dots, p_\Be$ be predicted probabilities for success in a binary outcome, 
\ie $p_\be = \Prob_\be(\rY = 1 \given \calD)$ and $w_1, \dots, w_\Be$ be non-negative
weights summing to one. Then $\bar p_\Be^t = \expit(\sum_\be w_\be\logit(p_\be))$ 
minimizes
\begin{align*}
    \max_\ry \NLL(p, \ry) - \sum_{\be = 1}^\Be w_\be \NLL(p_\be, \ry).
\end{align*}
\end{corollary}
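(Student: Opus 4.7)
The plan is to give a direct minimax calculation that exploits the fact that for a binary outcome the outer maximum is taken over only two points, reducing the problem to a one-dimensional optimization in $p \in (0,1)$. This bypasses the general machinery of quasi-arithmetic pooling; the same conclusion is recoverable as a special case of Theorem~4.1 in \citet{neyman2021proper}, but is more transparent here.

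First, I would write out the regret. With $\rY \in \{0, 1\}$ and $R(p, \ry) := \NLL(p, \ry) - \sum_\be w_\be \NLL(p_\be, \ry)$, the two possible values are
\begin{align*}
    R(p, 1) = -\log p + \sum_\be w_\be \log p_\be, \qquad R(p, 0) = -\log(1-p) + \sum_\be w_\be \log(1-p_\be),
\end{align*}
so the objective is $\max\{R(p, 0), R(p, 1)\}$. Second, since $R(\cdot, 1)$ is strictly decreasing and strictly convex while $R(\cdot, 0)$ is strictly increasing and strictly convex on $(0,1)$, the pointwise maximum is strictly convex and attains its unique minimizer $p^*$ at the crossover point where $R(p^*, 0) = R(p^*, 1)$. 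Rearranging this equality yields
\begin{align*}
    \log\!\frac{1-p^*}{p^*} = \sum_\be w_\be \log\!\frac{1-p_\be}{p_\be}, \quad \text{i.e.,} \quad \logit(p^*) = \sum_\be w_\be \logit(p_\be),
\end{align*}
so $p^* = \expit\bigl(\sum_\be w_\be \logit(p_\be)\bigr)$. Third, I would identify $p^*$ with $\bar p_\Be^t$ from Def.~\ref{def:trafoensemble} with $\pZ = \expit$: the ensemble CDF at $0$ is $\expit\bigl(\sum_\be w_\be \pZ^{-1}(1 - p_\be)\bigr)$, and using $\logit(1-p) = -\logit(p)$ this equals $1 - \expit\bigl(\sum_\be w_\be \logit(p_\be)\bigr)$, whence $\bar p_\Be^t = p^*$, as claimed.

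\paragraph{Main obstacle.}
There is no real mathematical obstacle; the proof is elementary once the regret is viewed as a function of $p$. The one subtlety worth being explicit about is why the corollary is restricted to binary outcomes: for $|\calY| > 2$, the CDF-scale transformation ensemble ceases to coincide with the density-scale quasi-arithmetic pool used by \citet{neyman2021proper}, and their Theorem~4.1 does not transfer. In the binary case the two schemes agree via the single algebraic identity $\expit\bigl(\sum_\be w_\be \logit(p_\be)\bigr) = \prod_\be p_\be^{w_\be} \big/ \bigl(\prod_\be p_\be^{w_\be} + \prod_\be (1-p_\be)^{w_\be}\bigr)$, i.e., the normalized geometric mean of the two Bernoulli densities, so the restriction to $|\calY| = 2$ is genuinely needed.
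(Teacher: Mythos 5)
Your proof is correct, but it takes a genuinely different route from the paper's. The paper proves the corollary by appealing to Theorem~4.1 of \citet{neyman2021proper}: it computes the integrated quantile function $G(p) = \int_0^p \pZ^{-1}(u)\dd u = p\log p + (1-p)\log(1-p) + C$ for $\pZ = \expit$, recognizes this as $-\Ex[\NLL(p, Y)] + C$ (the negative Shannon entropy of a Bernoulli), and concludes that $\logit = G'$ is exactly the subgradient $g$ for which quasi-arithmetic pooling is minimax optimal under the log-score. You instead give a self-contained two-point minimax calculation: the regret is the maximum of a strictly decreasing and a strictly increasing function of $p$ that cross exactly once, so the minimizer is the equalizer of $R(p,0)$ and $R(p,1)$, and solving that equation yields the logit pool directly. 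Both arguments are sound; the paper's buys the general mechanism (how the choice of $\pZ$ determines, via the integrated quantile function, the proper score for which the transformation ensemble is minimax optimal, which is what populates Table~\ref{tab:scores} and the RPS corollary), whereas yours buys transparency and independence from the external theorem, and makes explicit two points the paper only discusses informally after its proof: the uniqueness of the minimizer, and the algebraic identification of the CDF-scale transformation ensemble with the normalized geometric (log-linear density) pool, which is precisely why the equivalence, and hence the result, is confined to binary outcomes.
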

In words, transformation ensembles for binary outcomes with $\pZ = \expit$ (standard logistic)
are minimax optimal w.r.t. the NLL. Note that the result is independent of the type of ensemble 
members, \ie for minimax optimality to hold, the members do not need to be 
transformation models. The proof of Corollary~\ref{thm:minimax} is given in 
Appendix~\ref{app:theory}. There, we also prove minimax optimality of linear pooling in terms 
of RPS and Brier score.

\section{Experimental setup} \label{sec:setup}

We evaluate transformation ensembles in terms of prediction performance and
calibration on several publicly available, semi-structured data sets with nominal,
binary, and ordinal outcomes. We compare transformation ensembles, that preserve 
structure and interpretability of its members, to state-of-the-art ensembling methods 
(linear, log-linear ensembles, see Section~\ref{subsec:ensembles}) where the structure 
of the members is not preserved. In the following, we describe the different data sets 
and models used in our experiments.

\subsection{Data sets}
\label{subsec:datasets}

\paragraph{Melanoma} 
The publicly available melanoma data set \citep{dataset:melanoma} contains 
skin lesion color images of dimension $128 \times 128 \times 3$ along with 
age information of 33'058 patients. The response is binary and highly 
imbalanced (98.23\% of all skin lesions are benign and 1.77\% malignant).

\paragraph{UTKFace} 
The publicly available UTKFace data set contains facial images from people of 
various age groups. Here, age is treated as an ordinal outcome using seven categories,
0--3 ($n = 1'894$), 4--12 ($n = 1'519$), 13--19 ($n = 1'180$), 20--30 ($n = 8'068$),
31--45 ($n = 5'433$), 46--61 ($n = 3'216$) and $>61$ ($n = 2'395$) \citep{das2018}.
In addition, the data set contains sex (female, male) as a feature. To compare against 
the results reported in \citet{kook2020ordinal}, we use the same cropped version of the 
images reported therein. We simulate 10 additional covariates using the same simulation
scheme as in \citet{kook2020ordinal} with effect sizes $\pm \log 1.2$ for $X_{\{2, 3\}}$,
$\pm \log 1.5$ for $X_{\{5, 6\}}$, and $0$ for the remaining covariates.

\paragraph{MNIST} 
The MNIST data set is a publicly available data set \citep{lecun2010mnist} 
containing 60'000 gray-scaled training images of handwritten digits ranging 
from 0 to 9 with dimension $28 \times 28$. In order to be able to define a 
CDF for the outcome, we arbitrarily fix the order to $0, 1, \dots, 9$. We
choose to present results for the MNIST data set, to illustrate that the
transformation ensemble approach is applicable to unordered outcomes too.
All results for the MNIST data set are presented in Appendix~\ref{app:figures}.

\subsection{Models}
\label{subsec:models}

For all data sets, we train several (ordinal) neural network transformation 
models (see Section~\ref{subsec:ontram}) of varying intrinsic 
interpretability and flexibility. The data sets feature nominal, binary, and ordinal 
outcomes, all of which can be handled by \textsc{ontram}s.
\begin{table}[!ht]
    \centering
    \caption{Overview of data sets, outcomes, and models fitted thereon. 
    MNIST features a nominal outcome, for which we fit the most flexible 
    \cib{} model. The melanoma data set comes with a binary outcome for which 
    we can fit a semi-structured model (\ciblsx), two models using only one 
    of the two modalities, and the unconditional model (\si). For UTKFace, 
    age is split into ordered categories and several models of varying 
    complexity are fitted. All models use $\pZ = \expit$ as the target 
    distribution and hence all components of the transformation function 
    are interpretable on the log-odds scale. CI: Complex intercept. CS: Complex
    shift. SI: Simple intercept. LS: Linear shift. A subscript indicates the
    input data for a model term, \eg \csb{} is a neural network with image input
    modelling a complex shift effect.}
    \label{tab:models}
    \resizebox{0.9\textwidth}{!}{%
    \begin{tabular}{ccccc}
    \toprule
         \bf Data set & \bf Outcome & \bf Type & \bf Model name & 
            \bf Transformation function \\ \midrule
         \multirow{4}{*}{Melanoma} & 
         \multirow{4}{*}{benign/malign} & 
         \multirow{4}{*}{Binary} & 
         \ciblsx & $\eparm(\B) - \rx^\top\shiftparm$ \\
         & & & \cib & $\eparm(\B)$ \\ 
         & & & \silsx & $\eparm - \rx^\top\shiftparm$ \\ 
         & & & \si & $\eparm$ \\ 
         \midrule
         \multirow{6}{*}{UTKFace} & 
         \multirow{6}{*}{age groups} & 
         \multirow{6}{*}{Ordinal} & 
         \ciblsx & $\eparm_k(\B) - \rx^\top\shiftparm$ \\
         & & & \sicsblsx & $\eparm_k - \eta(\B) - \rx^\top\shiftparm$ \\ 
         & & & \cib & $\eparm_k(\B)$ \\ 
         & & & \sicsb & $\eparm_k - \eta(\B)$ \\ 
         & & & \silsx & $\eparm_k - \rx^\top\shiftparm$ \\ 
         & & & \si & $\eparm_k$ \\                   
         \midrule 
         \multirow{1}{*}{MNIST} & 
         \multirow{1}{*}{Digits 0--9} &
         \multirow{1}{*}{Nominal} &
         \cib & $\eparm_k(\B)$ \\
         \bottomrule
    \end{tabular}}
\end{table}
The degree of intrinsic interpretability can be controlled by parametrizing the 
transformation function in different ways (see Table~\ref{tab:models}). We follow 
the model nomenclature in \citet{kook2020ordinal}, where transformation functions
consist of an intercept (SI: simple intercept, \ie intercepts do not depend on input data, 
CI: complex intercept, intercepts depend on input data) and potentially shift terms (LS: 
linear shift, additive linear predictor of tabular features, CS: complex shift,
additive but flexible predictor depending on either tabular or image data).
A subscript indicates which part of the model depends on which input modality.
Note that for models with binary outcomes, the \cib{} and \sicsb{} are equivalent,
unless the complex shift term is restricted to have mean zero. Therefore, we only
fit the \cib{} version of these models which are reparametrizations of classical 
image convolutional networks with softmax last-layer activation.

\paragraph{Training}
All models were fitted by minimizing NLL or RPS, both of which are proper scores 
(Section~\ref{subsec:scoring}). In contrast to NLL, RPS is a global score which is
bounded and explicitly takes the natural order of the outcome into account. For binary 
responses, RPS reduces to the Brier score. We apply all ensemble methods to five 
instances of the same model for six random splits of each data set. Training 
procedures and model architectures are described in Appendix~\ref{app:experiments}
in more detail.

\paragraph{Tuning ensemble weights} 
Instead of equal weighting or top-$K$ ensembling, one can tune the ensemble weights
such that the prediction performance w.r.t. a proper score is optimized on a hold-out 
data set. Tuning the composition of an ensemble in such a way is related to stacking
\citep{wolpert1992stacked,breiman1996stacked}.
The hold-out set could be the validation set when splitting the data randomly or
the validation fold during a $k$-fold cross-validation. Concretely, we choose the 
weights by solving
\begin{align}\label{eq:weights}
\begin{split}
    \min_{w \geq 0 } &\sum_{i=1}^n s(\pensi^w, y_i), \\
    \mbox{s.t. } &\sum_m w_m = 1,
\end{split}
\end{align}
for some proper score $s$ and predictions from the weighted ensemble $\pensi^w$. 

\paragraph{Evaluation}
For all data sets, we evaluate the prediction performance via proper scores (NLL, RPS or BS, 
Section~\ref{sec:bg}) and discriminatory performance (accuracy, AUC and Cohen's quadratic 
weighted kappa). We investigate uncertainty quantification of individual models and ensembles 
using calibration plots. In addition, we report calibration in the large and calibration 
slope in Appendix~\ref{app:figures}.
For nominal and ordinal outcomes, these calibration metrics are computed per class. 
See Appendix~\ref{app:experiments} for the exact definitions of the above metrics.

\section{Results and discussion} \label{sec:results}

We apply linear, log-linear, and transformation ensembling to three data sets with 
a nominal, binary, and ordered outcome, respectively (see Section~\ref{subsec:datasets}).
We refer to the three ensemble types as LIN-Ens, LOG-Ens, and TRF-Ens in all figures,
respectively\footnote{All code for reproducing the results is available on GitHub 
\url{https://github.com/LucasKook/interpretable-deep-ensembles}.}.
The focus of the comparison is on the performance 
difference between ensemble types, but also between the different models, which vary 
in their degree of flexibility and interpretability (see Tab.~\ref{tab:models}).
Average ensemble test performance and bootstrap confidence intervals based on six 
random splits of the data are shown for all models. Beside prediction performance,
we discuss interpretability and calibration of the different models and ensembles.

\paragraph{Melanoma}
Four models of different flexibility and interpretability are fitted
(see Table~\ref{tab:models}) with the melanoma data in order to predict the conditional
probabilities of the unbalanced binary outcome benign or malignant lesion given a
person's age and an image of the skin lesion.
\begin{figure}[!ht]
    \centering
    \includegraphics[width=0.90\textwidth]{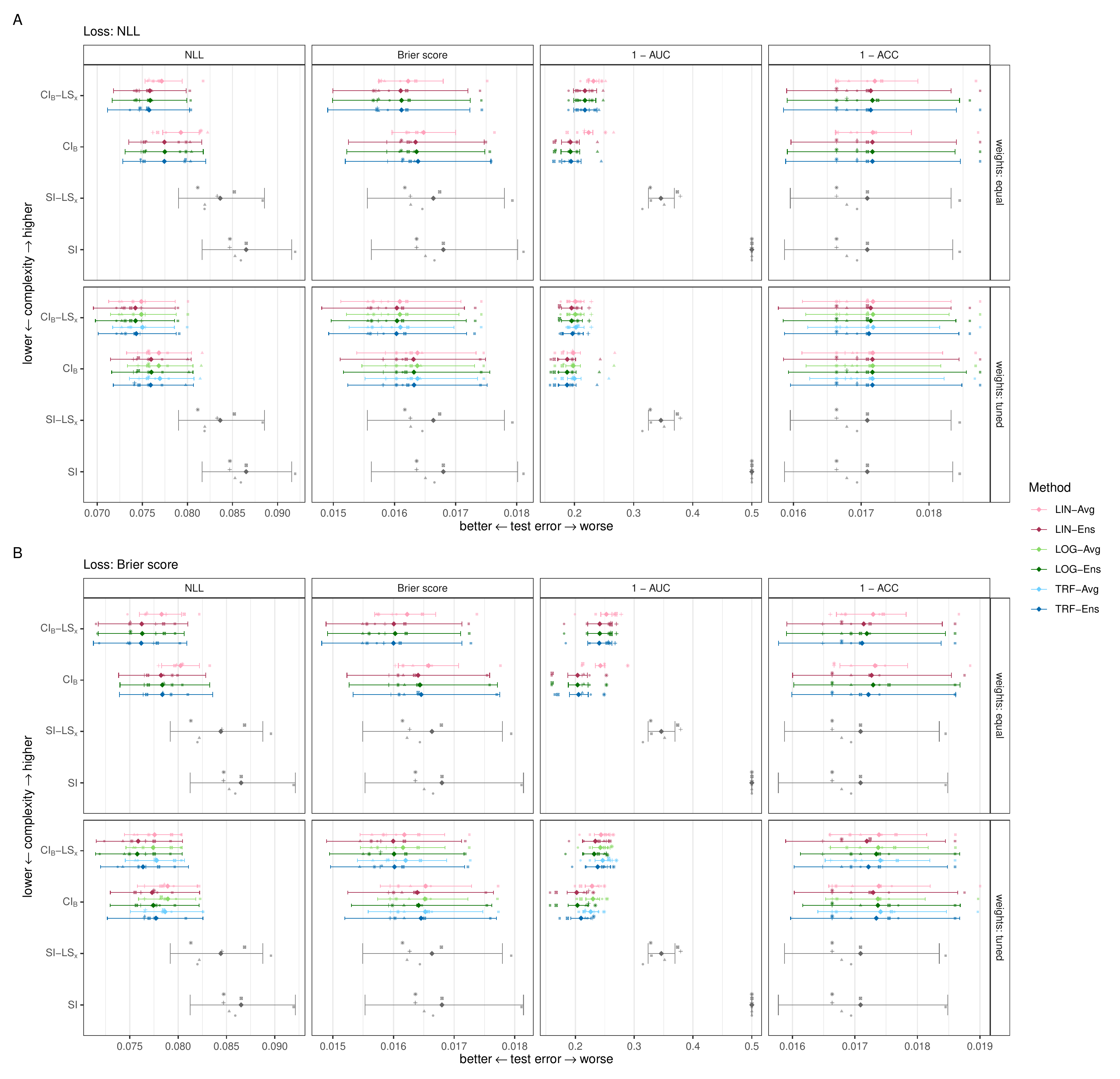}
    \caption{Performance estimates on the melanoma data set.
    The classical linear (LIN-Ens), classical log-linear (LOG-Ens),
    and transformation (TRF-Ens) ensemble test error is shown for 
    negative log-likelihood (NLL), Brier score, discrimination error
    ($1-\operatorname{AUC}$) and classification error 
    ($1-\operatorname{ACC}$). The average ensemble test error and 
    95\% bootstrap confidence intervals are depicted for six random splits
    (indicated by different symbols) of the data. In the upper panels 
    ensemble members are weighted equally and in the lower panels weights are
    tuned to minimize validation loss. In case of equal weights, the average 
    coincides for all ensemble types (LIN-Avg). Models are fitted by minimizing 
    NLL (\textsf{A}) or Brier score (\textsf{B}). Note the different scales for 
    \textsf{A} and \textsf{B}.}
    \label{fig:mela:perf}
\end{figure}

We now see empirically that all ensemble methods (LIN-Ens, LOG-Ens, TRF-Ens)
result in a higher test performance w.r.t. the two proper scores NLL (as shown 
in Prop.~\ref{prop:interpretable}) and RPS compared to their members' average 
(Avg, see Fig.~\ref{fig:mnist:perf}). When additionally tuning the ensemble weights
on the validation set, test prediction and discrimination performance improve further. 
Test performance indicates for all ensemble methods that both input modalities, a
person's age and appearance of the lesion, aid in predicting the risk of a 
malignant \vs a benign skin lesion (see Fig.~\ref{fig:mela:perf}). While the image data
(\cib{}) seems to be most important for prediction, including age (\ciblsx{}) further
improves prediction performance (NLL, Brier score). The unconditional model (SI) achieves
a NLL and Brier score close to that of the model (\silsx{}) based on age alone. However,
\silsx{} lacks in discriminatory ability (AUC), which suggests that the seemingly high 
performance of the unconditional model is due to the highly imbalanced outcome. Performance 
of the individual ensemble members and performance relative to the \silsx{} model are 
shown in Appendix~\ref{app:figures}.

For the best performing \ciblsx{} model all three ensemble methods improve compared 
to the average performance of the individual models and result in similarly low 
test error (NLL, Brier score). 
The positive effect of optimizing the ensemble weights seems to be more
pronounced when there is more variation in the individual members' performance
(note the larger benefit for the \cib{} model than for the \ciblsx{} model in 
Fig.~\ref{fig:mela:rel}). A reduction in between-split variation is observed,
when tuning the ensemble weights. Whether NLL or Brier score is optimized during
training influences test performance only slightly for this data set. Especially
for AUC, optimizing NLL instead of Brier score yields slightly better results (compare
Fig.~\ref{fig:mela:perf} \textsf{A} and \textsf{B}).
\begin{figure}[!ht]
    \centering
    \includegraphics[width=0.97\textwidth]{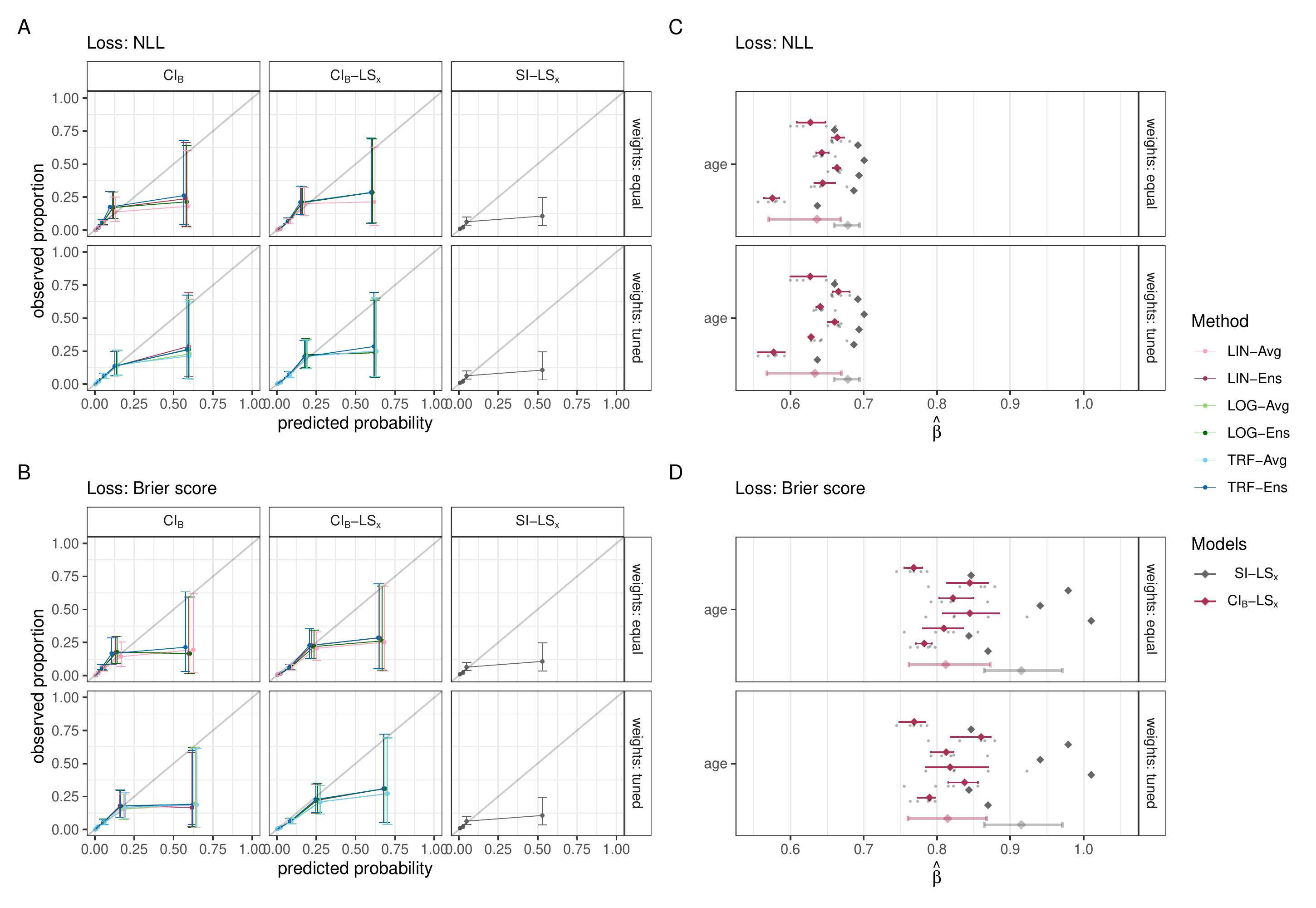}
    \caption{Calibration plots (\textsf{A} and \textsf{B}) and coefficient 
    estimates (\textsf{C} and \textsf{D}) for different models fitted on the 
    melanoma data set. Calibration with 95\% confidence intervals averaged 
    across splits are depicted for the classical linear (LIN-Ens),
    classical log-linear (LOG-Ens), and transformation (TRF-Ens) ensemble in 
    \textsf{A} and \textsf{B}. The predicted probabilities for a malign lesion
    are split at four empirical quantiles (0.5, 0.9, 0.99, 0.999) to compute
    the proportion of a malign lesion in the five resulting bins. 
    For the \silsx{}, the last two bins are merged.
    Panels \textsf{C} and \textsf{D} show log odds-ratios ($\hat\beta$) and 
    95\% bootstrap confidence intervals 
    for the standardized age predictor in the models \silsx{} and \ciblsx{}
    for each of the six random split. The average log odds-ratio across splits 
    is shown as transparent diamond along with a 95\% confidence interval.
    Individual log odds-ratios of the five ensemble members in each split
    are shown as transparent dots for the model \ciblsx{}.
    In the upper panels ensemble members are weighted equally and in the lower
    panels weights are tuned to minimize validation loss.}
    \label{fig:mela:calpl:lor}
\end{figure}

Calibrated predictions are hard to achieve when the outcome is highly imbalanced.
For predicted probabilities below $0.2$, all models seem to be well calibrated
(Fig.~\ref{fig:mela:calpl:lor} \textsf{A} and \textsf{B}). However, the \silsx{}
model over-predicts the probability for the rare outcome (malign lesion). Note
that the last bin includes only 0.1\% of observations and the model never predicts
probabilities of a malign lesion larger than 0.13. 
When including the image data, calibration improves somewhat and uncertainty
increases. Again, there is no pronounced difference between equal and tuned weights,
ensemble types, and ensembles and individual members, or NLL and Brier score loss. 
Calibration-in-the-large and calibration slope estimates are shown in
Appendix~\ref{app:figures}.

Lastly, since age is included as a tabular predictor, individual models and 
transformation ensembles thereof produce directly interpretable estimates for the 
log odds of a malign \vs a benign lesion upon increasing age by by one 
standard deviation (Fig.~\ref{fig:mela:calpl:lor} \textsf{C} and \textsf{D}). 
In all models, increasing age is associated with a higher risk of the lesion 
being malign. In transformation ensembles, the pooled estimate is simply the 
(weighted) average of the members' estimates. Note that the log odds-ratios
obtained by minimizing NLL and RPS agree in direction, but differ slightly in 
magnitude. In particular, the maximum RPS solution in \silsx{} is more uncertain
than the maximum likelihood solution (grey dots in Fig.~\ref{fig:mela:calpl:lor}
\textsf{C} \vs \textsf{D}).

\paragraph{UTKFace}
We fit six models of different flexibility and interpretability (see Table~\ref{tab:models}) 
to the UTKFace data in order to predict the conditional distribution of age categories 
given a person's face and/or sex and simulated tabular data. Fig.~\ref{fig:utkface:perf}
shows the average ensemble test error with bootstrap confidence intervals for four more
complex models (\ciblsx, \cib, \sicsblsx, \sicsb) and each ensembling method.
\begin{figure}[!ht]
    \centering
    \includegraphics[width=0.75\textwidth]{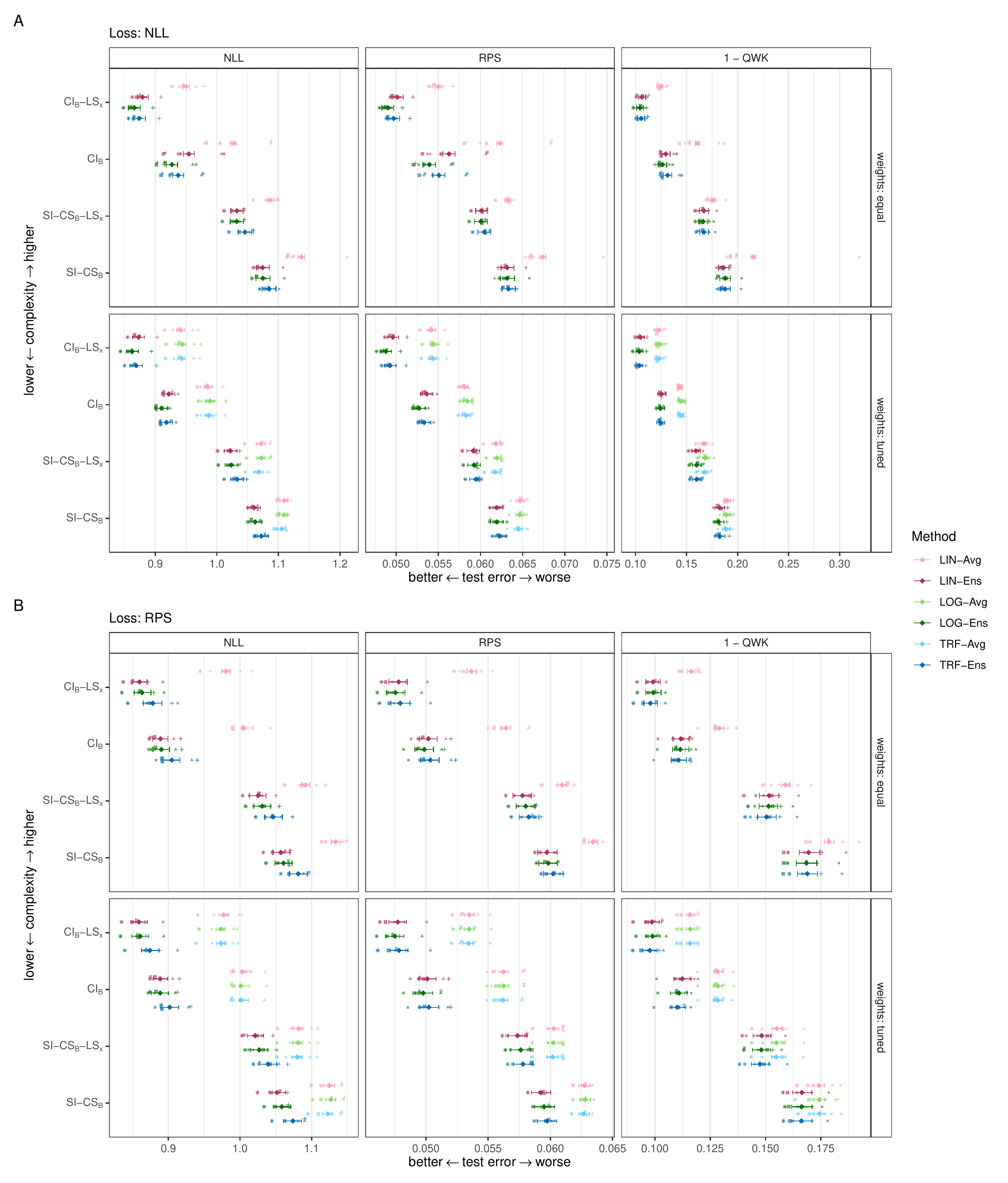}
    \caption{Performance estimates on the UTKFace data set. The classical linear 
    (LIN-Ens), classical log-linear (LOG-Ens), and transformation (TRF-Ens) 
    ensemble test error is shown for negative log-likelihood (NLL), ranked 
    probability score (RPS) and discrimination error measured by Cohen's 
    quadratic weighted kappa ($1-\operatorname{QWK}$). 
    The average ensemble test error and 95\% bootstrap confidence 
    intervals are depicted for six random splits (indicated by different symbols) 
    of the data. In the upper panels ensemble members are equally weighted 
    for constructing the ensemble and in the lower panels weights are tuned 
    to minimize validation loss. 
    In case of equal weights, the average coincides for all ensemble types (LIN-Avg).
    Models are fitted by minimizing NLL (\textsf{A}) or RPS (\textsf{B}).
    Note the different scales for \textsf{A} and \textsf{B}. The results for
    the \si{} and \silsx{} models are shown in Appendix~\ref{app:figures}.
    }
    \label{fig:utkface:perf}
\end{figure}
Test performance of the two simplest models (\si, \silsx) can be found in the 
Appendix~\ref{app:figures} along with the estimated coefficients for age and the 
ten simulated tabular predictors. Performance of the individual ensemble members 
and performance relative to the \silsx{} model are shown in Appendix~\ref{app:figures}.

Similar to the results found in \citet{kook2020ordinal} the most flexible model 
including both image and tabular data (\ciblsx) performs best across both proper
scores and discrimination metrics. Assuming proportional odds for the image term
does not seem to be appropriate, given the improved prediction performance when
loosening this assumption (\ciblsx{} \vs \sicsblsx{} and \cib{} \vs \sicsb).
All models including the image data perform considerably better than the benchmark 
model including only tabular data (\silsx, see Fig.~\ref{fig:utkface:rel}).

When comparing the different ensembling methods we again observe comparable
performance of the transformation ensemble to that of the classical ensembles 
for all test metrics. This difference is even more pronounced when 
ensemble weights are tuned to minimize validation loss (lower panels of 
Fig.~\ref{fig:utkface:perf}). As observed in the other data sets, 
tuning the ensemble weights reduces the variability in performance. 
As expected, models fitted by minimizing the RPS result in a lower test RPS.
However, the NLL of these models is also on par with or even better than when
optimizing the NLL. This provides some empirical evidence that optimizing the 
RPS may lead to more stable training for ordered outcomes, because it is bounded 
and global \citep{gneiting2005calibrated}.


\begin{figure}[!ht]
    \centering
    \includegraphics[width=0.90\textwidth]{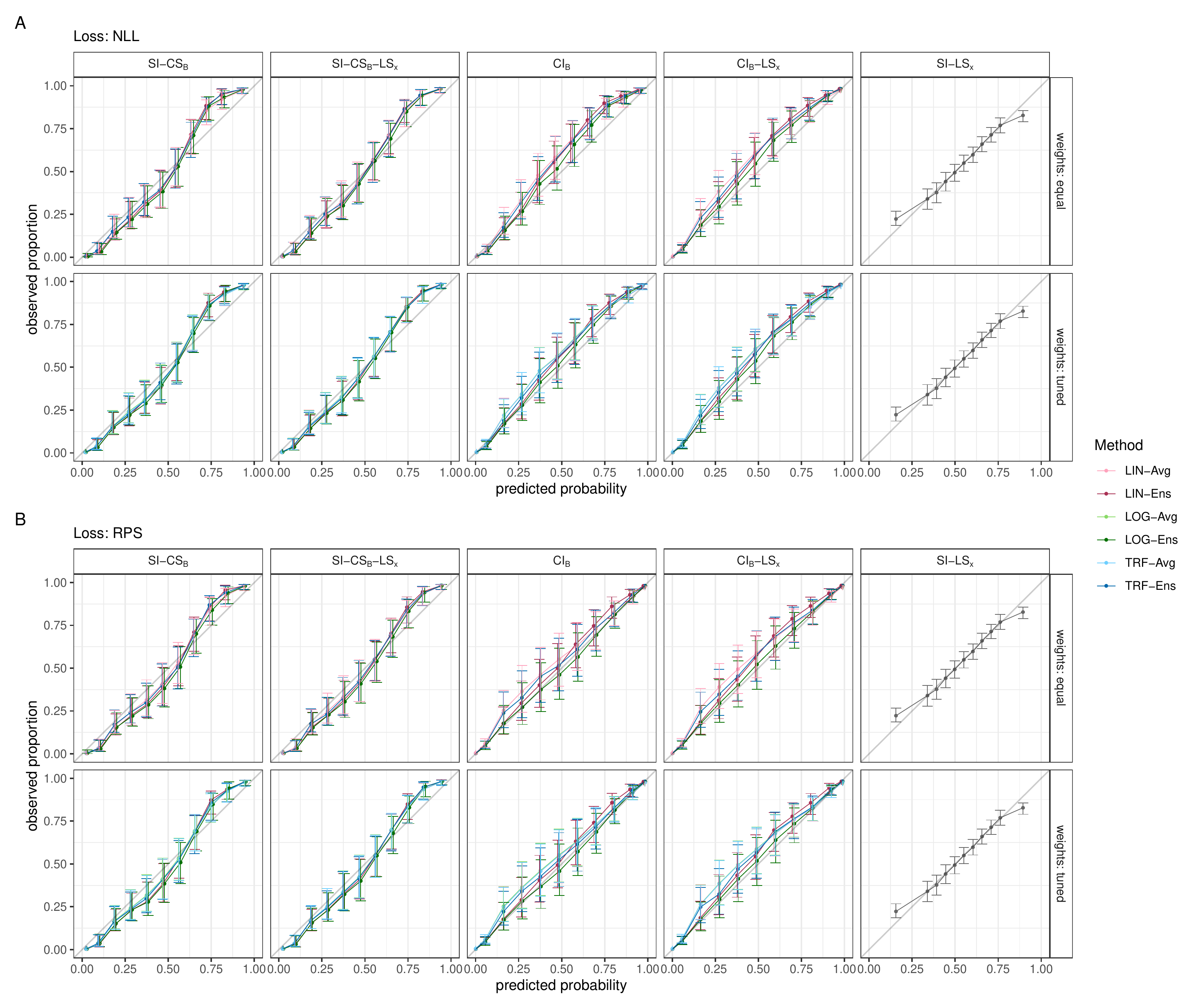}
    \caption{Calibration plots for different models fitted on the UTKFace data set.
    The average calibration across 6 random splits and 95\% 
    confidence intervals are depicted of the classical linear (LIN),
    classical log-linear (LOG), and transformation (TRF) ensemble.
    The predicted probabilities are split at the 0.5, 0.95 quantiles 
    and nine equidistant cut points in-between to calculate the observed 
    event rate in each bin.
    In the upper panels ensemble members are equally weighted 
    for constructing the ensemble and in the lower panels weights are tuned 
    to minimize validation loss. 
    Models are fitted by minimizing NLL (\textsf{A}) or RPS (\textsf{B}).}
    \label{fig:utkface:calpl}
\end{figure}
On the UTKFace data, models are fairly calibrated after training. The \silsx{}
model, including only tabular data, over-predicts large probabilities. Upon
including image data as complex intercepts, calibration again improves notably 
(\cib, \ciblsx), whereas under-prediction can be observed when modelling image
contributions as complex shifts (\sicsb, \sicsblsx). This can again be interpreted
as evidence that assuming proportional odds is unreasonable for this data set.
\cib{} and \ciblsx{} models are better calibrated when fitted with the RPS loss.
Again, no pronounced difference could be observed between the the two weighting 
schemes, or between ensembles and individual models. However, the log-linear ensemble 
produces the best-calibrated predictions in the \cib{} and \ciblsx{} models.
Calibration-in-the-large and calibration slope estimates are shown in 
Appendix~\ref{app:figures}.

For the ten simulated tabular predictors and sex the individual models with linear
shift terms and the corresponding transformation ensemble yield directly interpretable
log odds-ratios. In Figure~\ref{fig:utkface:or}, estimated log odds-ratios are depicted 
together with bootstrap confidence intervals. The \ciblsx{} models' estimates are 
close to the estimates of the \silsx{} models, whereas the estimates of the 
\sicsblsx{} model experience some shrinkage. The estimate for sex (female) 
changes sign upon inclusion of the image data, potentially due to collinearity between
learned image features and the tabular indicator variable. Again, the weighted average 
of the ensemble members' estimated coefficients is in fact the estimate in the 
transformation ensemble, which is not the case for any other type of ensemble 
discussed in this paper.

\paragraph{MNIST} 
Results for the MNIST data set can be found in Appendix~\ref{app:figures}.

\section{Summary and outlook} \label{sec:outlook}

Transformation ensembles bridge the gap between two common goals, prediction
performance and interpretability. Not only are they guaranteed to score better
than their ensemble members do on average, but in addition they preserve model 
structure and thus possess the same intrinsic interpretability as their members.
As a consequence, transformation ensembles allow to directly assess epistemic 
uncertainty in the intrinsically interpretable model parameters.
On multiple data sets we demonstrate that transformation ensembles improve both
probabilistic and discriminatory performance measures. Transformation ensembles
perform on par with classical ensembling approaches on all three datasets.
Thus transformation ensembles present a viable alternative to classical ensembling 
in terms of prediction and discrimination.

Preserving the intrinsic interpretability of its members is the most crucial benefit 
of transformation ensembles. Practitioners in fields where decisions are commonly 
based on multimodal and semi-structured data, such as medicine, require transparent,
well-performing and intrinsically interpretable models \citep{rudin2018stop}. 
As our results suggest, the increased flexibility of the model class when using 
classical ensemble techniques may often not be necessary. Instead, the more interpretable
transformation ensemble performs at least on par. In addition, transformation ensembles
simply pool interpretable model parameters in additive linear predictors using a (weighted) 
average. This does not only yield transformation ensembles with the same model structure 
and interpretability, but also allows to assess the epistemic uncertainty in linear shift
terms. This is in line with intuition but without theoretical 
justification when using any other type of ensemble. For complex shift and complex 
intercept terms which are not intrinsically interpretable, data analysts may still
use transformation ensembles and apply \emph{post hoc} explainability methods to those
model components \citep[Ch.~6]{molnar2020interpretable}.

Transparency requires clear communication of data and model uncertainty. Transformation
ensembles provide epistemic uncertainty estimates both in function space (transformation
function, CDF, or conditional quantiles) and parameter space (parameters of additive 
linear predictors). We use ensemble calibration to judge the quality of both aleatoric 
and epistemic uncertainty simultaneously.
In terms of calibration, no ensemble type led to a pronounced improvement over
individual models' average. For both data sets including tabular data (melanoma, UTKFace),
including the images alongside tabular data improved calibration. In clinical 
prediction modelling, where calibrated predictions are of high importance, it may
thus be advantageous to model image contributions or to aggregate re-calibrated
\citep[Ch.~15.3.5]{steyerberg2019} versions of the models.

Optimizing the ensemble weights on a hold out set, is a simple way to reduce 
variability between random splits of the data. If only little data is available
for training, a cross validation scheme may yield similar benefits when averaging
the weights over the cross validation folds.

In this article, we mainly focused on interpretability, prediction and calibration.
Apart from benefits in prediction performance and uncertainty quantification, deep 
ensembles have found to be robust when evaluated out of distribution, \eg under 
distributional shifts \citep[for a discussion, see][]{abe2022deep}.
We leave for future research how transformation ensembles compare to classical linear
ensembling in terms of robustness towards distributional shifts and related issues in
epistemic uncertainty estimation. Further theoretical directions include whether minimax
optimality of transformation ensembles for binary outcomes extends to ordered and
continuous outcomes.

\paragraph{Acknowledgements}
We thank Jeffrey Adams, Oliver D\"urr, Lisa Herzog, David R\"ugamer and Kelly Reeve 
for their valuable comments on the manuscript. The research of LK and BS was supported 
by Novartis Research Foundation (FreeNovation~2019) and by the Swiss National Science
Foundation (grant no. S-86013-01-01 and S-42344-04-01). TH was supported by the
Swiss National Science Foundation (SNF) under the project ``A Lego System for
Transformation Inference'' (grant no. 200021\_184603).


\vskip 0.2in
\bibliographystyle{plainnat}
\bibliography{bibliography} 


\appendix
\renewcommand{\thesection}{\Alph{section}}
\counterwithin{figure}{section}
\renewcommand\thefigure{\thesection\arabic{figure}}
\counterwithin{table}{section}
\renewcommand\thetable{\thesection\arabic{table}}

\section{Notation} \label{app:notation}

Random variables are written in uppercase italic, $\rY$, and realizations thereof
in lowercase, $\ry$. In case, the random variable is vector-valued, we write it and
its realizations in bold, \eg $\rX$, $\rx$. We denote probability measures by
$P$, PDFs by $f$ and CDFs by $F$. 
Ensemble members are denoted by $\pmemb$, or $\dmemb$, $\be = 1, \dots, \Be$, when
referring to the CDF or PDF, respectively. An ensemble is some average of these 
members and its CDF and PDF are denoted by $\pens$, $\dens$, respectively. 
A superscript then denotes the type of ensemble, \eg $\pens^c$ the classical 
linear ensemble. If, for two functions $f,g : A \to B$, $f(a) \leq g(a)$
for all $a \in A$, we omit the argument and write $f < g$.

\section{Computational details} \label{app:experiments}

The code for reproducing all experiments can be found on GitHub 
\url{https://github.com/LucasKook/interpretable-deep-ensembels}.
All components of the models used in this work (see Table~\ref{tab:models} for an 
overview) were controlled by neural networks and fitted jointly by minimizing the
negative log-likelihood or ranked probability score via stochastic gradient descent 
using the Adam optimizer \citep{Kingma2015adam}. The unconditional model (\si)
and the model containing tabular predictors only (\silsx) were fitted using the 
function \texttt{tram::Polr} \citep{pkg:tram} in case of ordinal responses and
\texttt{stats::glm} in case of binary responses. Both models were also 
implemented as \textsc{ontram}s to minimize the RPS. Factor variables were dummy 
encoded and continuous predictors were standardized to zero mean and unit variance.
All models were implemented in \textsf{R} \citep[version 4.1.1]{pkg:base}. 
Semi-structured data was modeled using Keras \citep[version 2.7.0]{pkg:keras} based 
on the TensorFlow backend \citep[version 2.7.0]{pkg:tensorflow} and trained on a GPU.

Simple intercept terms (and the \si{} model) were estimated using a 
fully-connected single-layer neural network with linear activation function, no 
bias term, and $K-1$ output nodes. Linear shift terms (\lsx) were estimated 
similarly but with a single output node. Both, simple intercept and linear shift 
terms were initialized using the parameters estimated by either a proportional odds 
logistic regression model (POLR) or a generalized linear model (GLM).

Terms depending on the image modality (\cib, \csb) were modeled using a 2D 
convolutional neural network (CNN) for all data sets. The last layer was common to 
all CNNs and consisted of either $K-1$ output nodes for complex intercept terms or 
a single output node for complex shift terms. The identity was used as activation 
(ReLU non-linearity was used in all other layers) and no bias term was included.
In case of intercept terms, the raw intercepts $\gamma \in \RR^{K-1}$ were further 
transformed to the intercepts $\parm$ via a cumulative softplus transformation
\begin{align*}
    \parm = \left(\gamma_1, \gamma_1 + \log(1 + \exp(\gamma_2)), \dots, \gamma_1 + 
    \sum_{k=2}^{K-1} \log(1 + \exp(\gamma_k))\right),
\end{align*}
to ensure monotonicity. The CNNs used to control complex model terms (\cib, \csb)
were initialized as Glorot uniform, and zero bias.
To prevent the models from overfitting, weights from the epoch with the smallest
validation loss were used as the final model. For all data sets, images were 
normalized to the unit interval via $(y - \min(y)) / (\max(y) - \min(y))$. 
No further pre-processing steps were undertaken.

For all data sets, the data was randomly split six times into a training, validation 
and a test set. Five models trained and evaluated on the same split 
built an ensemble. Hyperparameters, such as the number of training epochs or the 
weights for constructing the weighted ensemble, were tuned on the validation set.
In applications it may be advised to use a nested cross-validation scheme instead
of tuning on a single validation set. Model architectures and hyperparameters varied 
across data sets and will be described for each data set in the following.

\paragraph{MNIST}
To predict the nominal outcome a CNN with three convolutional stacks consisting of 
a convolutional layer and a max-pooling layer (window size $2 \times 2$ pixels, stride 
width $2$) was used. The number of filters used in each stack were $32$, $64$, $64$.
Filter size was set to $3 \times 3$ pixels. The fully-connected part comprised 
a layer with $100$ units followed by the output layer (see Fig.~\ref{fig:architectures}).

All 30 models (six ensembles consisting of five members each) were trained on 80\% and 
validated on 10\% of the data. Ensembles were validated on 10\% of the data. A learning 
rate of $10^{-5}$ and a batch size of $512$ were used for model training. 
\begin{figure}[!ht]
    \centering
    \resizebox{\linewidth}{!}{%
    \begin{tikzpicture}
    \node (A) {\small \bf MNIST};
    \node[right=3.5cm of A] (B) {\small \bf Melanoma};
    \node[right=3.5cm of B] (C) {\small \bf UTKFace};
    \node[below=0cm of A.south east] (mnist)
    {\includegraphics[width=0.3\textwidth]{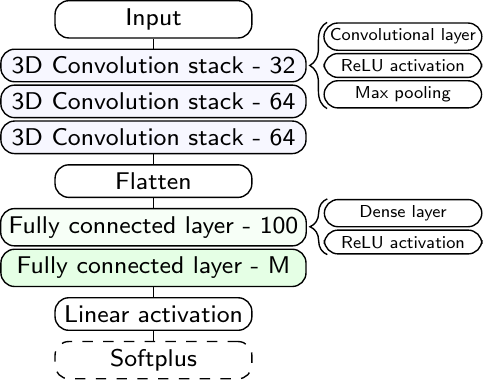}};
    \node[below=0cm of B.south east] (mela)
    {\includegraphics[width=0.3\textwidth]{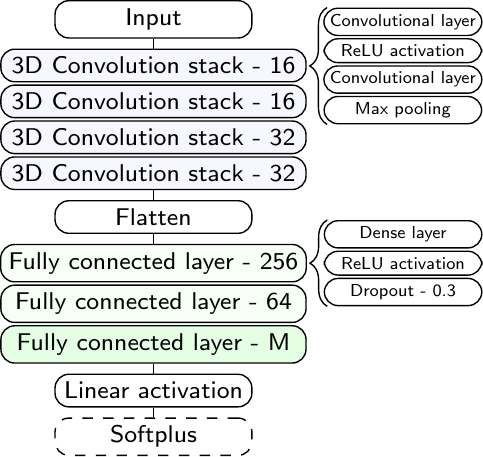}};
    \node[below=0cm of C.south east] (face)
    {\includegraphics[width=0.3\textwidth]{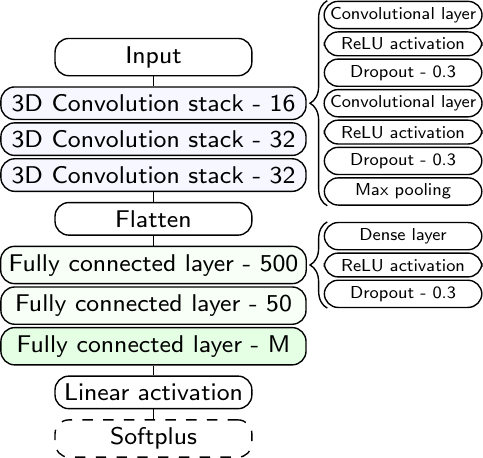}};
    \end{tikzpicture}}
    \caption{CNN architecture used for the MNIST, melanoma, and UTKFace data.}
    \label{fig:architectures}
\end{figure}

\paragraph{Melanoma} 
\cib{} and \csb{} terms were modeled using a CNN with two convolutional stacks 
with $16$ filters ($3 \times 3$ pixels) and two convolution stacks with $32$
filters ($3 \times 3$ pixels). Each convolutional layer was followed by a dropout 
layer (rate $0.3$) and a max-pooling layer (window size $2 \times 2$ pixels, stride 
width $2$). For the subsequent fully-connected part, two dense layers with $256$ 
and $64$ units and two dropout layers (rate $0.3$) were used (see 
Fig.~\ref{fig:architectures}).

Models were trained on 60\%, validated on 20\% and evaluated on 20\% of the data.
CNNs were trained using a learning rate of $10^{-4}$ and a batch size of 64. 

\paragraph{UTKFace}
A similar CNN architecture as in \cite{kook2020ordinal} was used to incorporate the 
image modality (Fig.~\ref{fig:architectures}). The architecture was originally 
inspired by \cite{simonyan2014very}. The model is given by three convolutional 
stacks of which each consists of two convolutional layers, two dropout layers 
(rate $0.3$) and a max-pooling layer (window size $2 \times 2$ pixels, 
stride width $2$). In the first two convolutional stacks $32$ filters were used 
followed by two stacks using $64$ filters ($3 \times 3$ pixels). After flatting,
the output entered the fully connected part comprised of two dense layers separated
by a dropout layer (rate $0.3$). 

All models were trained on 60\%, validated on 20\% and evaluated on 20\% of the data.
CNNs were trained using a learning rate of $10^{-3}$ and a batch size of 32.

\paragraph{Evaluation metrics}
Classification metrics depend on the (top-label) classification, which we define as 
the mode of the predicted conditional density, \ie
\begin{align*}
    c = \argmax_k f(\ry_k \given \calD),
\end{align*}
from which we then compute the accuracy
\begin{align*}
  \operatorname{ACC} = \frac{1}{n} \sum_{i = 1}^n \delta_{\ry_i}^{c_i},
  \mbox{ where } \delta_a^b = \begin{cases} 1 & \mbox{if } a = b \\ 0 & 
  \mbox{otherwise}, \end{cases}
\end{align*}
and Cohen's quadratic weighted $\kappa$,
\begin{align*} 
\kappa_w = \frac{\sum_{i,j}w_{ij}o_{ij} - \sum_{i,j}w_{ij}e_{ij}}{1 -
    \sum_{i,j}w_{ij}e_{ij}}.
\end{align*}
Here, $e$ and $o$ denote the expected and observed number of agreements in row $i$ 
and column $j$ of a confusion matrix, respectively. In addition, the weights 
$w_{ij} = \frac{\lvert i - j \rvert^q}{(K-1)^p}$ control the penalization for 
predictions farther away from the observed class via $q$.

The AUC is a discrimination metric, defined as
\begin{align*}
    \operatorname{AUC} &= \max\{\operatorname{PI}, 1 - \operatorname{PI}\}, 
    \mbox{ where } \\
    \operatorname{PI} &= \Prob(\rY_1 < \rY_2 \given \calD_1, \calD_2) +
    \frac{1}{2} \Prob(\rY_1 = \rY_2 \given \calD_1, \calD_2).
\end{align*}
Here, PI denotes the probabilistic index and $\rY_1 \given \calD_1$ and
$\rY_2 \given \calD_2$ are independent and distributed according to $F_{\rY \given \calD}$ 
\citep{thas2012probabilistic}.

Lastly, we use calibration plots and compute two calibration metrics, calibration
in the large and calibration slope \citep{steyerberg2019}. Calibration plots are 
produced using the \pkg{caret} add-on package for \proglang{R} \citep{pkg:caret}.
Confidence intervals are obtained via standard binomial tests and averaged over 
splits. The calibration slope $b_1$ is obtained by regressing the indicator for a 
class on the predicted log odds for that class,
\begin{align*}
    \logit(\Ex[\I(\rY > \ry_k) \given \hat r_k]) = b_0 + b_1\hat r_k,
\end{align*}
where $\hat r_k = \logit(\hat\Prob(\rY > \ry_k \given \calD))$. Similarly, calibration 
in the large $a$, is the intercept in the model offsetting the predicted log odds
$\hat r_k$,
\begin{align*}
    \logit(\Ex[\I(\rY > \ry_k) \given \hat r_k]) = a + 1\hat r_k.
\end{align*}

\paragraph{Bootstrap confidence intervals}
For each metric, we report bootstrap confidence intervals over $b = 1, \dots, B = 1'000$
bootstrap samples of $i = 1, \dots, n_{\rm test}$ test observations, for each of 
the $s = 1, \dots, S = 6$ splits and $\be = 1, \dots, \Be = 5$ members, by taking the
2.5th and 97.5th empirical quantile of the bootstrap metric. For the three ensemble types,
there is no step of averaging over the $\Be$ members. In case of relative performance,
the fixed \silsx{} performance per split was subtracted instead of the bootstrap 
metric. To construct confidence intervals for the coefficient estimates, the $\Be$ 
member estimates were bootstrapped.

\section{Proofs and additional results} \label{app:theory}

\paragraph{Proof of Proposition~\ref{prop:interpretable}}
We re-state Prop.~\ref{prop:interpretable}. For the proof, we exploit the fact 
that in transformation models, $\pZ$ has a log-concave density.
\begin{proposition*}
Let $\pZ \circ \h_1, \dots, \pZ \circ \h_\Be$ be transformation model CDFs with
$w_1, \dots, w_\Be$ non-negative weights summing to one. Let $\ell: \calP \to \RR$
denote the log-likelihood induced by the transformation model $\pZ \circ \h$. Then,
$-\ell\left(\pZ(\sum_\be w_\be \hb)\right) \leq 
- \sum w_\be \ell\left(\pZ\circ\hb \right)$.
\end{proposition*}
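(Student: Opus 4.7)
The plan is to unpack the NLL of a transformation model at a single observation $y$ (or more generally a censoring set) and bound the two terms separately using (i) concavity of $\log$ and (ii) log-concavity of $\dZ$, which is a standing assumption on the reference distribution in transformation models.

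For a continuous outcome, the predicted density of a transformation model $\pZ\circ\h$ at $y$ is $\dZ(\h(y))\,\h'(y)$, so the pointwise NLL decomposes as
\begin{equation*}
  -\log\bigl(\dZ(\h(y))\bigr) \;-\; \log\bigl(\h'(y)\bigr).
\end{equation*}
Applied to the transformation ensemble $\hens=\sum_\be w_\be \hb$, whose derivative is $\hens'=\sum_\be w_\be \hb'$, the first step is to bound the derivative term via Jensen's inequality: since $\log$ is concave,
\begin{equation*}
  \log\!\Bigl(\textstyle\sum_\be w_\be \hb'(y)\Bigr) \;\geq\; \textstyle\sum_\be w_\be \log \hb'(y),
\end{equation*}
so that $-\log\hens'(y)\leq -\sum_\be w_\be \log \hb'(y)$. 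The second step uses log-concavity of $\dZ$ directly on the argument $\hens(y)$, which is itself a convex combination of the $\hb(y)$:
\begin{equation*}
  \log \dZ\!\Bigl(\textstyle\sum_\be w_\be \hb(y)\Bigr) \;\geq\; \textstyle\sum_\be w_\be \log \dZ(\hb(y)),
\end{equation*}
yielding $-\log\dZ(\hens(y)) \leq -\sum_\be w_\be \log\dZ(\hb(y))$. Adding the two bounds gives the claim pointwise, and summing over the sample gives the stated inequality for the empirical NLL.

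The only nontrivial extension is the censored case, where the pointwise NLL is $-\log\bigl(\pZ(\h(\bar y))-\pZ(\h(\ubar y))\bigr)$. Here I would invoke the fact that, for a log-concave density $\dZ$, the map $(a,b)\mapsto \int_a^b \dZ(z)\dd z = \pZ(b)-\pZ(a)$ is log-concave jointly in its endpoints; this is a standard consequence of Prékopa's theorem applied to $\mathds{1}\{a\leq z\leq b\}\dZ(z)$. Applying this to the convex combination $(\hens(\ubar y),\hens(\bar y))=\sum_\be w_\be(\hb(\ubar y),\hb(\bar y))$ gives exactly the censored analogue of the bound above. I expect this censored step to be the main obstacle, since it is the one place where the simple one-variable log-concavity of $\dZ$ is not literally what one needs; invoking Prékopa (or equivalently checking log-concavity of the truncated integral directly) is what makes the argument go through. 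Once that is in place, monotonicity of $\hens$ (inherited from all $\hb$ being monotone) ensures the argument interval is nondegenerate and the bound is well-defined.
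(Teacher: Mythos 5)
Your proof is correct and follows essentially the same route as the paper's: the same decomposition of the pointwise NLL into $-\log\dZ(\hens)-\log\hens'$, with Jensen applied once via log-concavity of $\dZ$ and once via concavity of $\log$. For the censored case the paper simply asserts convexity of $-\log\bigl(\pZ(\h(\bar y))-\pZ(\h(\ubar y))\bigr)$ in the transformation functions, whereas you justify it explicitly via Prékopa's theorem (joint log-concavity of $(a,b)\mapsto\pZ(b)-\pZ(a)$); that is a slightly more careful version of the same step, not a different argument.
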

\begin{proof}
The log-density of the transformation ensemble is given by
\begin{align*}
    \log \dens^t(\ry) = \log \frac{\dd}{\dd\ry} \pZ\left(\sum_\be w_\be \hb(\ry)\right) = 
    \log\left[\dZ\left(\sum_\be w_\be \hb(\ry)\right)\sum_\be w_\be \hb'(\ry)\right].
\end{align*}
Now, the result follows by applying Jensen's inequality for both terms,
\begin{align*}
  - \log \dZ \left(\sum_\be w_\be \hb\right) - \log \left( \sum_\be w_\be \hb'\right)
  \leq - \sum_\be w_\be \log \left(\dZ \circ \hb\right) - \sum_\be w_\be \log \hb',
\end{align*}
once using log-concavity of $\dZ$ and once using concavity of the log.

In case of interval-censored responses (see Section~\ref{sec:bg}), 
$y \in (\ubar\ry, \bar\ry]$ the likelihood contribution,
\begin{align*}
    - \log\left[\pZ\left(\sum_\be w_\be \hb(\bar\ry)\right) - 
    \pZ\left(\sum_\be w_\be \hb(\ubar\ry)\right)\right],
\end{align*}
is convex in $\hb$ because the transformation function is monotone non-decreasing and 
$\ubar\ry < \bar\ry$. The desired claim follows again via Jensen's inequality.
\end{proof}
Similar results for the Brier score and RPS (a sum of Brier scores) do not hold 
for general $\pZ$, because log-concavity of $\pZ$ is not a strong enough 
condition to ensure convexity of $L \circ \pZ$. 

Next, we briefly state and prove propositions for better-than-average prediction
performance of the two variants of the log-linear ensemble (PDF and CDF pooling).
\begin{proposition}\label{prop:logensdens}
Let $\dmem_1, \dots, \dmem_\Be$ be conditional densities with non-negative weights
$w_1, \dots, w_\Be$ summing to one. Let $\dens = c \exp\left(\sum_\be w_\be \dmemb\right)$,
where $c^{-1} := \int_{-\infty}^{+\infty} \exp(\sum_\be w_\be \log \dmemb)$.
Then, $\NLL(\dens) \leq \sum_\be w_\be \NLL(\dmemb)$.
\end{proposition}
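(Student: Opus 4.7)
The plan is to rewrite the ensemble NLL into a sum of the average members' NLL plus a normalization term, and then argue that this normalization term is non-positive. Specifically, by definition
\begin{align*}
\NLL(\dens, \ry) = -\log \dens(\ry) = -\log c - \sum_\be w_\be \log \dmemb(\ry)
= -\log c + \sum_\be w_\be \NLL(\dmemb, \ry),
\end{align*}
so the claim reduces to showing $-\log c \leq 0$, \ie $c^{-1} \leq 1$, where $c^{-1} = \int \exp(\sum_\be w_\be \log \dmemb(\ry))\,\dd\ry$.

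The key step is to bound the integrand pointwise by a convex combination of the members' densities. This is exactly the weighted AM--GM inequality (or equivalently, Jensen's inequality applied to the convex function $\exp$): for every $\ry$,
\begin{align*}
\exp\!\left(\sum_\be w_\be \log \dmemb(\ry)\right) = \prod_\be \dmemb(\ry)^{w_\be} \leq \sum_\be w_\be \dmemb(\ry).
\end{align*}
Integrating both sides over $\ry$ and using that each $\dmemb$ is a density, so that $\int \dmemb = 1$, yields
\begin{align*}
c^{-1} = \int \prod_\be \dmemb(\ry)^{w_\be}\,\dd\ry \leq \sum_\be w_\be \int \dmemb(\ry)\,\dd\ry = \sum_\be w_\be = 1.
\end{align*}
Hence $c \geq 1$ and $-\log c \leq 0$, so the identity above gives $\NLL(\dens, \ry) \leq \sum_\be w_\be \NLL(\dmemb, \ry)$ pointwise, which also holds in expectation under any data-generating distribution.

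There is no real obstacle here: the only ingredient beyond bookkeeping is the weighted AM--GM step, and the result is strictly sharper than Prop.~\ref{prop:classical} would give for the linear ensemble, because the log-linear ensemble is itself bounded above pointwise by the linear ensemble (which is what the AM--GM inequality provides). One subtlety worth flagging in the written-out proof is that $c$ must be finite for $\dens$ to be a well-defined density; this requires $\int \prod_\be \dmemb^{w_\be} < \infty$, which is automatic from the AM--GM bound above. The analogous statement for pooling CDFs (rather than PDFs) can be proved the same way, replacing densities by cumulative probabilities.
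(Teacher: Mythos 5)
Your proof is correct and follows essentially the same route as the paper's: reduce the claim to showing $c \geq 1$ via the decomposition $\NLL(\dens) = -\log c + \sum_\be w_\be \NLL(\dmemb)$, then bound $c^{-1} = \int \exp(\sum_\be w_\be \log \dmemb) \leq \int \sum_\be w_\be \dmemb = 1$ by Jensen's inequality (your weighted AM--GM step is exactly this). The extra remarks on finiteness of $c$ and the pointwise domination by the linear ensemble are sensible additions but not needed for the argument.
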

\begin{proof}
The ensemble NLL is given by $\NLL(\dens) = - \log c - \sum_\be w_\be \log \dmemb$,
hence we need to show $c \geq 1$. Indeed, we have
\begin{align*}
   c^{-1} = \int_{-\infty}^{+\infty} \exp\left(\sum_\be w_\be \log \dmemb\right)
   \leq \int_{-\infty}^{+\infty} \sum_\be w_\be \dmemb = 1,
\end{align*}
via Jensen's inequality.
\end{proof}

\begin{proposition}\label{prop:logenscum}
Let $\pmem_1, \dots, \pmem_\Be$ be conditional CDFs with non-negative weights
$w_1, \dots, w_\Be$ summing to one. Let $\pens = \exp(\sum_\be w_\be \pmemb)$.
Then, $\NLL(\dens) \leq \sum w_\be \NLL(\dmemb)$, where 
$\dens = \frac{\dd}{\dd\ry} \pens = \exp(\sum_\be \log \pmemb)
(\sum_\be w_\be \dmemb/\pmemb)$.
\end{proposition}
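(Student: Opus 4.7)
The plan is to write out $\NLL(\dens) = -\log\dens$ using the given formula for the ensemble density, collect the two summands arising from the product form, and reduce the claim to a single application of Jensen's inequality for the (concave) logarithm.

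First, I would expand
\begin{align*}
 \NLL(\dens) = -\log \dens = -\sum_\be w_\be \log\pmemb \;-\; \log\!\left(\sum_\be w_\be \frac{\dmemb}{\pmemb}\right),
\end{align*}
and rewrite the target inequality $\NLL(\dens) \leq \sum_\be w_\be \NLL(\dmemb) = -\sum_\be w_\be \log \dmemb$ in the equivalent form
\begin{align*}
 \sum_\be w_\be \log\!\left(\frac{\dmemb}{\pmemb}\right) \;\leq\; \log\!\left(\sum_\be w_\be \frac{\dmemb}{\pmemb}\right).
\end{align*}
Since the quantities $\dmemb/\pmemb$ are non-negative (both $\dmemb$ and $\pmemb$ are non-negative and the log-linear ensemble is only well-defined where $\pmemb > 0$) and the weights $w_\be$ are non-negative summing to one, this is exactly Jensen's inequality for the concave function $\log$, and the claim follows.

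The only mild subtlety, and arguably the main thing to be careful about, is the domain issue: the derivation above implicitly needs $\pmemb(\ry) > 0$ for each $\be$ (otherwise $\log \pmemb$ and hence $\pens$ is undefined). On the set where every $\pmemb$ is positive the argument is immediate; on points where some $\pmemb$ vanishes the log-linear CDF pooling degenerates, so we restrict the inequality to the support where $\pens > 0$, which is the natural domain for the NLL anyway. No log-concavity or other structural assumption on the $\pmemb$'s is needed, in contrast to Proposition~\ref{prop:interpretable}; the argument is purely Jensen's inequality applied to the ratio $\dmemb/\pmemb$.
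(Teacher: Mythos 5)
Your proof is correct and takes essentially the same route as the paper's: expand $-\log \dens$ into the two summands arising from the product form of $\dens$ and apply Jensen's inequality for the concave logarithm to the convex combination of the ratios $\dmemb/\pmemb$. Your remark about restricting to the set where each $\pmemb > 0$ is a sensible clarification that the paper leaves implicit.
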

\begin{proof}
The ensemble NLL is given by
\begin{align*}
    \NLL(\dens) &= - \sum_\be w_\be \log \pmemb - 
        \log\left(\sum_\be w_\be \dmemb/\pmemb\right) \\
    &\leq - \sum_\be w_\be \log \pmemb - 
        \sum_\be w_\be \log \dmemb + \sum_\be w_\be \log \pmemb \\
    &= - \sum_\be w_\be \log \dmemb = \sum_\be w_\be \NLL(\dmemb),
\end{align*}
by Jensen's inequality.
\end{proof}

\begin{remark}
Propositions~\ref{prop:logensdens} and~\ref{prop:logenscum} illustrate 
that both log-linear ensembles of PDFs and CDFs produce better than average
predictions. Normally densities are aggregated in log-linear ensembles. However,
since we propose a CDF-based ensemble, the CDF version of the log-linear
ensemble is a more direct comparator to transformation ensembles in our 
experiments.
\end{remark}

Next, we present more results on minimax optimality of quasi-arithmetic pooling
and transformation ensembles.

\paragraph{Proof of Corollary~\ref{thm:minimax}}
First, we give some intuition.
The approach by \citet{neyman2021proper} can be inverted to find a (regular) proper
scoring rule corresponding to a given pooling method. The transformation ensemble
in general uses quasi-arithmetic pooling with $g = \pZ^{-1}$ on the CDF scale. Now,
instead of finding the minimax optimal pooling for a score, we take the reverse route 
and construct
\begin{align} \label{eq:quantint}
    G(p) = \int_0^p \pZ^{-1}(u) \dd u.
\end{align}
Eq.~\eqref{eq:quantint} is an integrated quantile function and has close connections
to the expectation of $\rZ$ and trimmed expectations in general. This can be seen in 
$\Ex[\rZ] = \Ex[\pZ^{-1}(U)] = \int_0^1 \pZ^{-1}(u) \dd u$, where $U$ is standard 
uniform. Before we prove Corollary~\ref{thm:minimax}, we restate Theorem~4.1 from
\citet{neyman2021proper}.
\begin{theorem}[Minimax optimality, Theorem~4.1 in \citet{neyman2021proper}]
\label{thm:neyman}
Let $p_1, \dots, p_\Be$ be probability densities for a nominal outcome, 
\ie $p_\be(\ry_k) = \Prob_\be(\rY = \ry_k)$, with non-negative weights 
$w_1, \dots, w_\Be$ summing to one. Then
\begin{align*}
    \max_\ry s(p, \ry) - \sum_{\be = 1}^\Be w_\be s(p_\be, \ry)
\end{align*}
is minimized by $p = g^{-1}(\sum_\be w_\be g(p_\be))$, where $s$ is a
proper score and $g$ is a subgradient of the expected score.
\end{theorem}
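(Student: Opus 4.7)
The plan is to combine the Savage representation of proper scoring rules with a ``worst-case dominates weighted average'' bound, using the candidate minimizer itself as the test distribution. I will evaluate the minimax objective at $p^* := g^{-1}(\bar g)$ where $\bar g := \sum_\be w_\be\, g(p_\be)$, show that its value is constant in $\ry$ at $p^*$, and then dominate the objective at every other $p$ by that same constant.

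First I set $G(p) := \sum_k p_k\, s(p, \ry_k)$, the expected score, which by properness of $s$ admits the Savage representation
\begin{equation*}
s(p, \ry_k) \;=\; G(p) + g(p)\cdot(e_k-p), \qquad k=1,\dots,K,
\end{equation*}
where $g(p)$ is the (sub)gradient of $G$ at $p$ referenced in the theorem and $e_k$ is the $k$-th unit vector. Substituting into the regret $R(p, \ry_k) := s(p, \ry_k) - \sum_\be w_\be\, s(p_\be, \ry_k)$ isolates the $k$-dependence:
\begin{equation*}
R(p, \ry_k) \;=\; \bigl[G(p)-g(p)\cdot p\bigr] - \sum_\be w_\be\bigl[G(p_\be)-g(p_\be)\cdot p_\be\bigr] + \bigl(g(p)-\bar g\bigr)_k.
\end{equation*}
At $p=p^*$ the last coordinate-dependent term vanishes, so $\max_k R(p^*, \ry_k)$ equals a constant $C$ depending only on $G$, $\bar g$, and $p_1,\dots,p_\Be$.

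For any other $p$, I lower bound the max by the expectation under $p^*$:
\begin{equation*}
\max_k R(p, \ry_k) \;\geq\; \sum_k p^*_k\, R(p, \ry_k) \;=\; \EW_{p^*}[s(p,\rY)] - \sum_\be w_\be\, \EW_{p^*}[s(p_\be,\rY)].
\end{equation*}
Properness of $s$ gives $\EW_{p^*}[s(p,\rY)] \geq \EW_{p^*}[s(p^*,\rY)] = G(p^*)$, while Savage yields $\EW_{p^*}[s(p_\be,\rY)] = G(p_\be) + g(p_\be)\cdot(p^*-p_\be)$. Substituting these back reproduces exactly $C$, so $\max_k R(p, \ry_k) \geq \max_k R(p^*, \ry_k)$, establishing minimax optimality.

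The main obstacle will be making the inversion $g^{-1}$ rigorous: the subgradient $g$ is only determined modulo an additive multiple of $\mathbf{1}$ (any such shift cancels in the Savage representation because $\mathbf{1}\cdot(e_k-p)=0$) and may fail to be single-valued at non-smooth points of $G$. I would fix a canonical normalization such as $g(p)\cdot\mathbf{1}=0$, read $g^{-1}(\bar g)$ as any $p^*$ with $g(p^*) = \bar g$ modulo this normalization, and note that existence of such a $p^*$ follows from convexity of the image of $g$ whenever each $p_\be$ lies in its domain. Uniqueness of $p^*$ (and hence of the minimizer) would require strict properness of $s$.
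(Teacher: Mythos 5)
The paper offers no proof of this statement: it is imported verbatim as Theorem~4.1 of Neyman and Roughgarden (2021) and used purely as a black box to derive the minimax corollary, so there is no in-paper argument to compare yours against. Your proof is a correct, self-contained derivation of the cited result, and it is essentially the canonical one. The Savage representation you invoke holds exactly with the choice $g(p)_k = s(p,\ry_k)$ (modulo shifts by multiples of $\mathbf{1}$), since then $g(p)\cdot p = G(p)$ and properness is precisely the statement that this $g$ is a sub/supergradient of $G$; your decomposition of the regret correctly isolates the outcome dependence in $(g(p)-\bar g)_k$, so the regret of the quasi-arithmetic pool is constant in $k$; and the lower bound for any competitor via its expected regret under $p^*$, combined with properness and the Savage identity for each $\EW_{p^*}[s(p_\be,\rY)]$, reproduces exactly that constant, including the cancellation of the $\mathbf{1}$-shift ambiguity in $C$. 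Two minor remarks. First, with this paper's negatively oriented scores $G$ is concave and $g$ is a supergradient rather than a subgradient; you use the orientation consistently (your inequality $\EW_{p^*}[s(p,\rY)]\geq G(p^*)$ is the paper's Definition~3 of properness), so this is only a terminological mismatch inherited from the theorem statement itself. Second, your existence argument for $p^*$ is the one loose spot: rather than ``convexity of the image of $g$,'' the clean route is that $\bar g$ is a supergradient of $G$ at $p^*$ if and only if $p^*$ maximizes $p\mapsto G(p)-\bar g\cdot p$ over the simplex, which is attained by compactness and upper semicontinuity of the concave $G$; this is exactly the regularity caveat the original source handles with its notion of a regular proper score. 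Neither remark affects correctness.
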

We now re-state and prove Corollary~\ref{thm:minimax}.
\begin{corollary*}
Let $p_1, \dots, p_\Be$ be predicted probabilities for success in a binary outcome, 
\ie $p_\be = \Prob_\be(\rY = 1 \given \calD)$ and $w_1, \dots, w_\Be$ be non-negative
weights summing to one. Then $\bar p_\Be^t = \expit(\sum_\be w_\be\logit(p_\be))$ 
minimizes
\begin{align*}
    \max_\ry \NLL(p, \ry) - \sum_{\be = 1}^\Be w_\be \NLL(p_\be, \ry).
\end{align*}
\end{corollary*}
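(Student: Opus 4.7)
The plan is to deduce Corollary~\ref{thm:minimax} from Theorem~\ref{thm:neyman} by specializing to the binary case and identifying the logit as the appropriate subgradient. First, I would observe that a binary outcome $\rY \in \{0,1\}$ is a nominal outcome with $K=2$, so the density is determined by the single success probability $p_\be = \Prob_\be(\rY=1\given\calD)$. This puts the problem inside the scope of Theorem~\ref{thm:neyman}, provided one can show that the pooling prescription $g^{-1}(\sum_\be w_\be g(p_\be))$ specializes to $\expit(\sum_\be w_\be \logit(p_\be))$ for the log-score.

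Second, I would compute the subgradient $g$ of the expected log-score for binary outcomes explicitly. Writing $\NLL(p,\ry) = -\ry\log p - (1-\ry)\log(1-p)$, the expected score under a true success probability $q$ is $S(p;q) = -q\log p - (1-q)\log(1-p)$, whose pointwise minimum (generalized entropy) is $H(q) = -q\log q - (1-q)\log(1-q)$. A standard Bregman-divergence calculation, or direct differentiation, yields $H'(q) = -\logit(q)$ up to a sign convention; equivalently, the subgradient map characterizing the log-score in the sense of Theorem~\ref{thm:neyman} is $g = \logit$, with inverse $g^{-1} = \expit$. This is the one nontrivial computation of the proof.

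Third, I would substitute $g = \logit$ into Theorem~\ref{thm:neyman} to conclude that the minimizer of
\begin{equation*}
    \max_\ry \NLL(p,\ry) - \sum_{\be=1}^\Be w_\be \NLL(p_\be,\ry)
\end{equation*}
is precisely $\bar p_\Be^t = \expit\bigl(\sum_\be w_\be \logit(p_\be)\bigr)$, which is the transformation-ensemble formula from Definition~\ref{def:trafoensemble} with $\pZ = \expit$. A brief closing remark would verify consistency with the CDF-based ensemble definition: for a binary outcome the nontrivial CDF value is $\Prob(\rY \leq 0) = 1 - p$, and $\logit(1-p) = -\logit(p)$, so pooling CDFs under $\logit$ and pooling success probabilities under $\logit$ yield the same ensemble success probability, reconciling Definition~\ref{def:trafoensemble} with the density-level statement of Theorem~\ref{thm:neyman}.

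The main obstacle is the subgradient identification in the second step: one must correctly translate the abstract statement of Theorem~\ref{thm:neyman} (which concerns gradients of the expected score viewed as a function on the simplex) into the concrete one-parameter logit map. Once this dictionary is fixed, the remainder of the argument is a direct specialization, and the additional remark about CDF-\vs{} density-level pooling is essentially a sign check.
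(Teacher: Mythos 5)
Your proposal is correct and follows essentially the same route as the paper: both reduce to Theorem~4.1 of \citet{neyman2021proper} by identifying $\logit$ as the (sub)gradient of the expected log-score in the binary case --- the paper integrates $\pZ^{-1}=\logit$ to recover $p\log p+(1-p)\log(1-p)=-\Ex[\NLL(p,Y)]+C$, while you differentiate the entropy to recover $\logit$, which is the same computation run in reverse. Your closing sign check $\logit(1-p)=-\logit(p)$ reconciling CDF-level with density-level pooling matches the remark the paper makes immediately after its proof.
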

\begin{proof}
For transformation ensembles with $\pZ = \expit$ and binary outcomes, we have
for each member $p \in \{p_1, \dots, p_\Be\}$,
\begin{align*}
    G(p) = \int_0^p \pZ^{-1}(u) \dd u = p \log p + (1 - p) \log (1 - p) + C 
        = - \Ex[\NLL(p, Y)] + C.
\end{align*}
The result then follows from Theorem~\ref{thm:neyman}.
\end{proof}
Thus we find that transformation ensembling with $\pZ = \expit$ is minimax optimal 
for binary outcomes in terms of NLL. Note that the restriction to binary outcomes 
is necessary, because \citet{neyman2021proper} ensemble densities of nominal outcomes, 
for which the CDF is ill-defined.
We aggregate predictions on the CDF scale, because this ensures interpretability of 
the transformation ensemble. In turn, the two approaches can only be equivalent for 
binary outcomes, for which the distribution is fully characterized by 
$\Prob(\rY \leq \ry_0) = \Prob(\rY = \ry_0) = p$ when defining the ``order'' of the 
binary outcome appropriately.

\paragraph{Minimax optimal pooling for the RPS}
\citet{neyman2021proper} present results solely for nominal outcomes, for which the
CDF is not well-defined. However, we can still extend their results by considering
linear pooling, for which the aggregation scale (CDF or PDF) does not matter. Then,
we arrive at the following result.
\begin{corollary}[Classical linear ensembling is minimax optimal i.t.o. RPS]
\label{thm:minmaxrps}
Let $\pmem_1, \dots, \pmem_\Be$ be predicted conditional CDFs for an ordinal outcome
$\rY$ with sample space $\{\ry_1 < \dots < \ry_K\}$, 
\ie $\pmemb(\ry_k) = \Prob_\be(\rY \leq \ry_k \given \rx)$,
with non-negative weights $w_1, \dots, w_\Be$ summing to one. Then
\begin{align*}
    \max_\pmem \RPS(\pmem, \ry_k) - \sum_{\be = 1}^\Be w_\be \RPS(\pmemb, \ry_k)
\end{align*}
is minimized by $\pmem = \sum_\be w_\be \pmemb = \pens^c$.
\end{corollary}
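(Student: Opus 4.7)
The plan is to verify minimax optimality of $F^\ast := \sum_\be w_\be \pmemb$ in two movements: first compute the regret $\RPS(F, y_k) - \sum_\be w_\be \RPS(\pmemb, y_k)$ at $F = F^\ast$, then bound its maximum over $y_k$ from below for an arbitrary competitor $F$. Applying the elementary bias--variance identity $\sum_\be w_\be(a_\be-b)^2 = \sum_\be w_\be(a_\be-\bar a)^2 + (\bar a - b)^2$ (with $\bar a = \sum_\be w_\be a_\be$) threshold-by-threshold to the $K-1$ summands in the definition of $\RPS$, the regret at $F=F^\ast$ collapses to $-(K-1)^{-1}\sum_{j=1}^{K-1}\sum_\be w_\be(\pmemb(y_j)-F^\ast(y_j))^2$, a negative constant that is crucially \emph{independent of $y_k$}. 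Hence the maximum over $y_k$ at $F=F^\ast$ equals this constant.

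For the lower bound on the max-regret at a general $F$, I would use the trivial bound $\max_k a_k \geq \Ex_Q[a_\rY]$ for any probability measure $Q$ on $\{y_1,\ldots,y_K\}$, and choose $Q$ so that $\Prob_Q(\rY\leq y_j) = F^\ast(y_j)$ for every $j$. This $Q$ is well defined because $F^\ast$, being a convex combination of CDFs, is itself a CDF on the ordered grid. Under this particular $Q$ each indicator $\I(\rY\leq y_j)$ is Bernoulli with mean $F^\ast(y_j)$, so applying the same bias--variance identity to the expected regret $\Ex_Q[\RPS(F,\rY)] - \sum_\be w_\be \Ex_Q[\RPS(\pmemb,\rY)]$ yields $(K-1)^{-1}\sum_j\bigl[(F(y_j)-F^\ast(y_j))^2 - \sum_\be w_\be(\pmemb(y_j)-F^\ast(y_j))^2\bigr]$. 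The squared-bias term is nonnegative, so the expected regret, and therefore the max-regret, is at least the negative constant attained uniformly by $F^\ast$, which proves the corollary.

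The main obstacle is spotting the correct averaging measure $Q$: the choice whose CDF coincides with $F^\ast$ is precisely what cancels the bilinear cross-terms between $F$ and the indicators so that the lower bound is tight at $F^\ast$; any other measure leaves a first-order deviation that prevents a sharp comparison. An alternative route is to view $\RPS$ as a quadratic strictly proper score in the cumulative-probability parameterisation $(F(y_1),\ldots,F(y_{K-1}))$ and invoke Theorem~4.1 of \citet{neyman2021proper} componentwise: the gradient of the expected score is affine, so the implied subgradient map $g$ is the identity in each coordinate and quasi-arithmetic pooling collapses to the arithmetic (linear) mean $\pens^c$.
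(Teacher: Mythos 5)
Your primary argument is correct and takes a genuinely different route from the paper. The paper proves the corollary by computing the expected RPS as a function of the predicted density, observing that its (sub)gradient is affine, and then importing Theorem~4.1 of \citet{neyman2021proper} so that quasi-arithmetic pooling with an affine $g$ degenerates to the arithmetic mean; your second paragraph's ``alternative route'' is essentially this. Your main proof instead verifies minimaxity directly: the threshold-wise bias--variance identity shows the regret of $\pens^c$ equals the constant $-(K-1)^{-1}\sum_j\sum_\be w_\be\bigl(\pmemb(\ry_j)-\pens^c(\ry_j)\bigr)^2$ independently of $\ry_k$, and bounding $\max_k$ below by the expectation under the measure $Q$ whose CDF is $\pens^c$ makes the entropy terms $\pens^c(\ry_j)(1-\pens^c(\ry_j))$ cancel, leaving a nonnegative squared-bias surplus for any competitor. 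I checked both computations and they are right. What this buys you: the argument is self-contained, it sidesteps the slightly delicate point that \citet{neyman2021proper} state their theorem for densities of nominal outcomes (which the paper has to patch by noting that linear pooling is insensitive to the CDF-versus-PDF scale), and it exhibits the minimax value explicitly as minus a diversity term measuring the dispersion of the members around their mean. What the paper's route buys is brevity and a uniform treatment of RPS alongside the log-score and Brier score within the quasi-arithmetic pooling framework. One cosmetic remark: your ``negative constant'' should read ``nonpositive constant,'' since it vanishes when all members coincide.
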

\begin{proof} 
The expected RPS is given by
\begin{align*}
  \Ex[\RPS(F, Y)] = f(y_1)\RPS(F, y_1) + f(y_2)\RPS(F, y_2) + \dots +
    f(y_K) \RPS(F, y_K),
\end{align*}
where each RPS term can be simplified to
\begin{align*}
  (K - 1) \RPS(F, y_k) 
  & = \sum_{l=1}^K \left(F(y_l) - \I(y_k \leq y_l) \right)^2 \\
   & = \sum_{l=1}^K F^2(y_l) - 2F(y_l)\I(y_k \leq y_l) + \I(y_k \leq y_l)^2 \\
   & = \sum_{l=1}^K F^2(y_l) - 2F(y_l)\I(y_k \leq y_l) + \I(y_k \leq y_l) \\
   & = \sum_{l=1}^K F^2(y_l) - 
    \sum_{l=1}^K (2F(y_l) - 1) \I(y_k \leq y_l) \\
  &= \sum_{l=1}^K F^2(y_l) - \sum_{l=k}^K 2 F(y_l) - (K - k).
\end{align*}
We can thus write the expected RPS as
\begin{align*}
    (K - 1) \Ex[\RPS(F, Y)] &= (K - 1) \sum_{k=1}^K f(y_k) \RPS(F, y_k) \\
    &= \sum_{l=1}^K F^2(y_l) - \sum_{k=1}^K \left[f(y_k) \left(\sum_{l = k}^K 2 F(y_l)
    - (K - k)\right)\right].
\end{align*}
It is now sufficient to show that the derivative of the expected RPS w.r.t.
$f(y_1), \dots, f(y_K)$ is affine in $f(y_k)$ to obtain the result.
Now, the first term is independent of $f(y_k)$. The second sum is indeed 
proportional to $f(y_k)$. Now linear pooling being minimax optimal in terms 
of RPS follows from Theorem~\ref{thm:neyman}.
\end{proof}


\begin{landscape}
\section{Scoring rules} \label{app:scoringrules}

Here, we collect the (proper) scoring rules used in this article with their exact 
definition. We do so to prevent confusion with varying nomenclature in other parts 
of the literature. Improper scores and other evaluation metrics we use are 
listed in Appendix~\ref{app:experiments}.

\begin{table}[!ht]
    \centering
    \caption{List of scoring rules, definitions, expectations, divergences, minimax
    optimal quasi-arithmetic pooling schemes. The table is largely inspired by 
    \citet{brocker2009reliability}. Results on QA pooling for Brier's quadratic 
    score and the log-score for nominal outcomes can be found in 
    \citet{neyman2021proper}. All others are derived in Appendix~\ref{app:theory}.
    Binary outcomes: $\rY \sim \operatorname{Bernoulli}(p)$.
    Nominal outcomes: $\rY \sim \operatorname{Discrete}(p_1, \dots, p_K)$ with density
    $f(\ry_k) = \Prob(\rY = \ry_k)$. Ordinal outcomes: Same as nominal, but the CDF 
    $\pmem(\ry_k) = \Prob(\rY \leq \ry_k)$ is well-defined.}
    \label{tab:scores}
    \resizebox{\linewidth}{!}{%
    \renewcommand{\arraystretch}{3}
    \begin{tabular}{cccccc}
    \toprule
     \bf Name & \bf Outcome & \bf Definition & \bf Entropy & \bf Divergence & 
     \bf QA pooling 
     \\ \midrule
     Brier score 
     & Binary 
     & $(y - p)^2$ 
     & $p(1 - p)$ 
     & $(p - q)^2$
     & $\sum_\be w_\be p_\be$
     \\
     
     Brier's quadratic score 
     & Nominal
     & $\sum_l f^2(\ry_l) - 2 f(\ry_k)$ 
     & $\sum_l p_l^2$
     & $\sum_l (p_l - q_l)^2$
     & $\sum_\be w_\be \dmemb(\ry_k)$
     \\
     
     Ranked probability score 
     & Ordinal 
     & $\frac{1}{K-1} \sum_l (F(\ry_l) - \I(\ry_k \leq \ry_l))^2$ 
     & $\frac{1}{K-1} \sum_l F(\ry_l) (1 - F(\ry_l))$
     & $\frac{1}{K-1} \sum_l (F(\ry_l) - G(\ry_l))^2$
     & $\sum_\be w_\be \pmemb(\ry_k)$
     \\
     
     log-score, ignorance, NLL
     & Binary
     & $- y \log p - (1 - y) \log 1 - p$ 
     & $- p \log(p) - (1 - p) \log(1 - p)$
     & $- p \log\frac{p}{q} - (1 - p) \log\frac{1 - p}{1 - q}$
     & $\expit\left(\sum_\be w_\be \logit p_\be \right)$
     \\
     
     log-score, ignorance, NLL
     & Nominal
     & $- \log \dmem(\ry_k)$ 
     & $- \sum_l \dmem(\ry_l) \log(\dmem(\ry_l))$
     & $- \sum_l \dmem(\ry_l) \log\left(\frac{\g(\ry_l)}{\dmem(\ry_l)}\right)$
     & $\exp\left(\sum_\be w_\be \log \dmemb(\ry_k) \right)$
     \\
     \bottomrule
    \end{tabular}}
\end{table}
\end{landscape}

\section{Additional empirical results} \label{app:figures}

In this appendix, we provide further results for each data set. We show performance 
of the individual ensemble members as gray dots, in order to better judge the
improvement ensembling yields over single members. In addition, we show performance
relative to the \silsx{} model (by taking differences within splits) in order to
eliminate between-split variation (for the melanoma and UTKFace data). Lastly, we
show calibration-in-the-large (CITL) and calibration slope (C slope) and coefficient
estimates for the additive linear predictors (when present in the model).

\subsection{MNIST}

Based on the MNIST image data we fit a complex intercept model (see Table~\ref{tab:models}) 
in order to predict the conditional probabilities of single digits from 0 to 9. The test
performance of the different ensemble methods and the average performance of individual
ensemble members are shown in Fig.~\ref{fig:mnist:perf}. 
\begin{figure}[!ht]
    \centering
    \includegraphics[width=\textwidth]{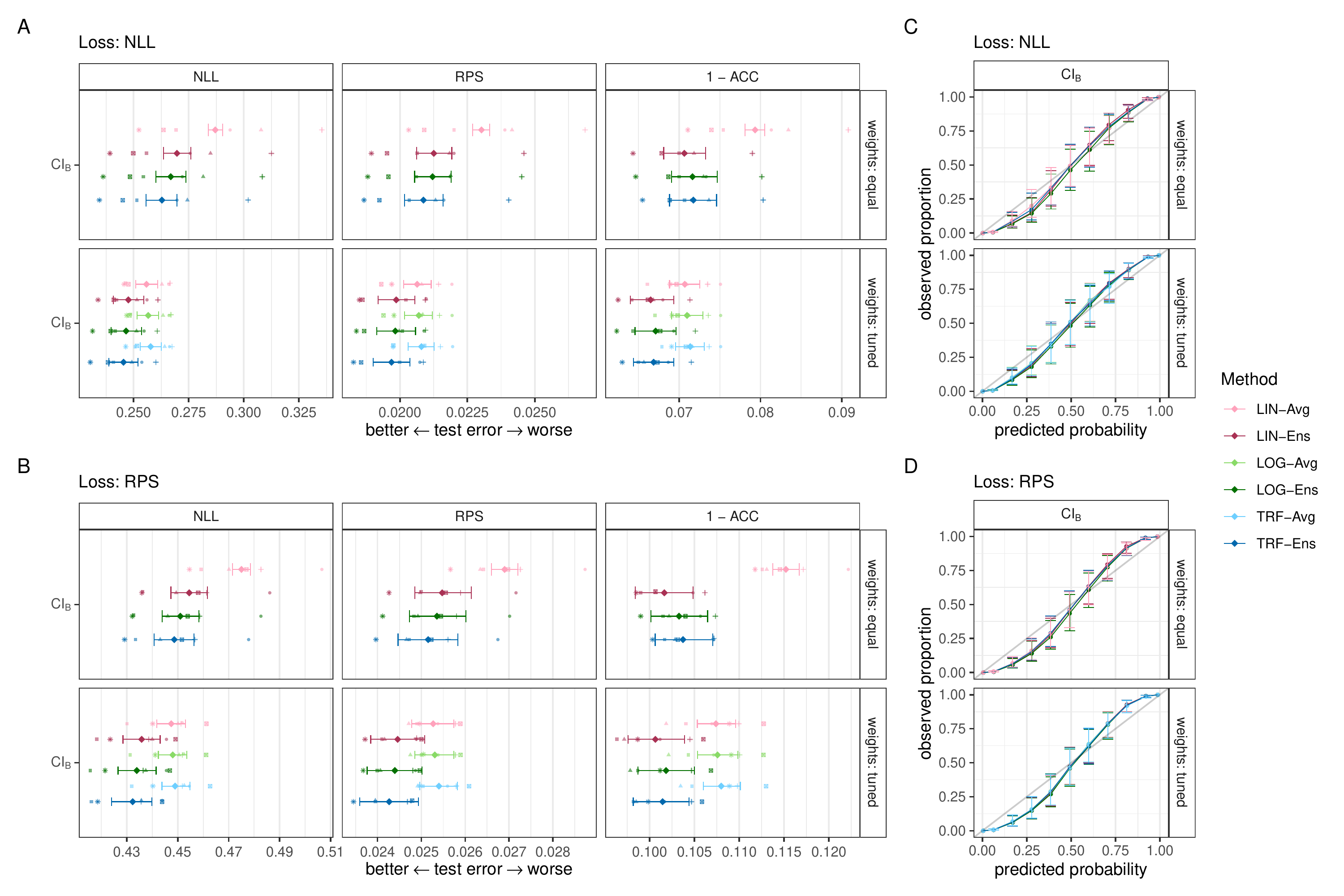}
    \caption{Performance estimates (\textsf{A}, \textsf{B}) and 
    calibration plots (\textsf{C}, \textsf{D}) for the \cib{} model fitted
    on the MNIST data set. The classical linear (LIN-Ens), classical log-linear 
    (LOG-Ens), and transformation (TRF-Ens) ensemble test error is shown for 
    negative log-likelihood (NLL), ranked probability score (RPS) and classification 
    error ($1-\operatorname{ACC}$) in \textsf{A} and \textsf{B} along with the
    average (AVG) performance measures of the individual members. The average 
    ensemble test error and 95\% bootstrap confidence intervals are depicted 
    for six random splits of the data (indicated by different symbols).
    \textsf{C} and \textsf{D} show calibration and 95\% confidence intervals 
    averaged across outcome classes and splits. The predicted probabilities were 
    split at the 0.5, 0.95 quantiles and nine equidistant cut points in-between to 
    calculate the observed event rate in each bin (for details, see
    Appendix~\ref{app:experiments}).
    In the upper panels ensemble members were equally weighted for constructing the
    ensemble and in the lower panels weights were tuned to minimize validation loss.
    In case of equal weights, the average coincides for all ensemble types (LIN-Avg).
    Models were trained by minimizing NLL (\textsf{A}, \textsf{C}) or RPS (\textsf{B},
    \textsf{D}). Note the different scales for \textsf{A} and \textsf{B}.}
    \label{fig:mnist:perf}
\end{figure}

When additionally tuning the ensemble weights on the 
validation set, test prediction and discrimination performance improve further. 
The between-split variation in performance is lower for the weighted ensembles than 
in ensembles where each member has the same weight, especially when the models were
trained by minimizing the NLL (compare upper and lower panels of Fig.~\ref{fig:mnist:perf}).
For each metric, transformation ensembles perform at least on-par with classical
(LIN-Ens and LOG-Ens) ensemble approaches. In this complex intercept model without 
interpretable additive components, no interpretability is gained when using the 
transformation instead of the linear or log-linear ensemble.

Note that even though the outcome is
nominal, the arbitrarily chosen ordering of the outcome does not influence performance
when training with the NLL loss. However, when training the models using the RPS loss 
which explicitly uses the ordering of the outcome performance is worse in all proper 
scores and discrimination metrics compared to the models trained by minimizing the NLL
(\cf Fig.~\ref{fig:mnist:perf}\textsf{A} and \textsf{B}).

The calibration plots (Fig.~\ref{fig:mnist:perf}\textsf{B} and \textsf{D}) show a
tendency for overconfident predictions. Calibration was slightly better when 
minimizing NLL instead of RPS during training. No pronounced difference in
calibration was observed between single models and ensembles, between types of 
ensembles or between equal and tuned ensemble weights. Calibration-in-the-large
and calibration slope estimates are shown in Appendix~\ref{app:figures}.

\paragraph{Additional results for the MNIST data set}
Fig.~\ref{fig:mnist:indiv} shows the performance of all $5 \times 6$ individual 
models. Conclusions from the main text remain unaltered. Calibration metrics are
summarized in Fig.~\ref{fig:mnist:cal:indiv}, and again show the slight 
miscalibration of all models, with worse calibration when using the RPS loss.

\begin{figure}[!ht]
    \centering
    \includegraphics[width=0.7\textwidth]{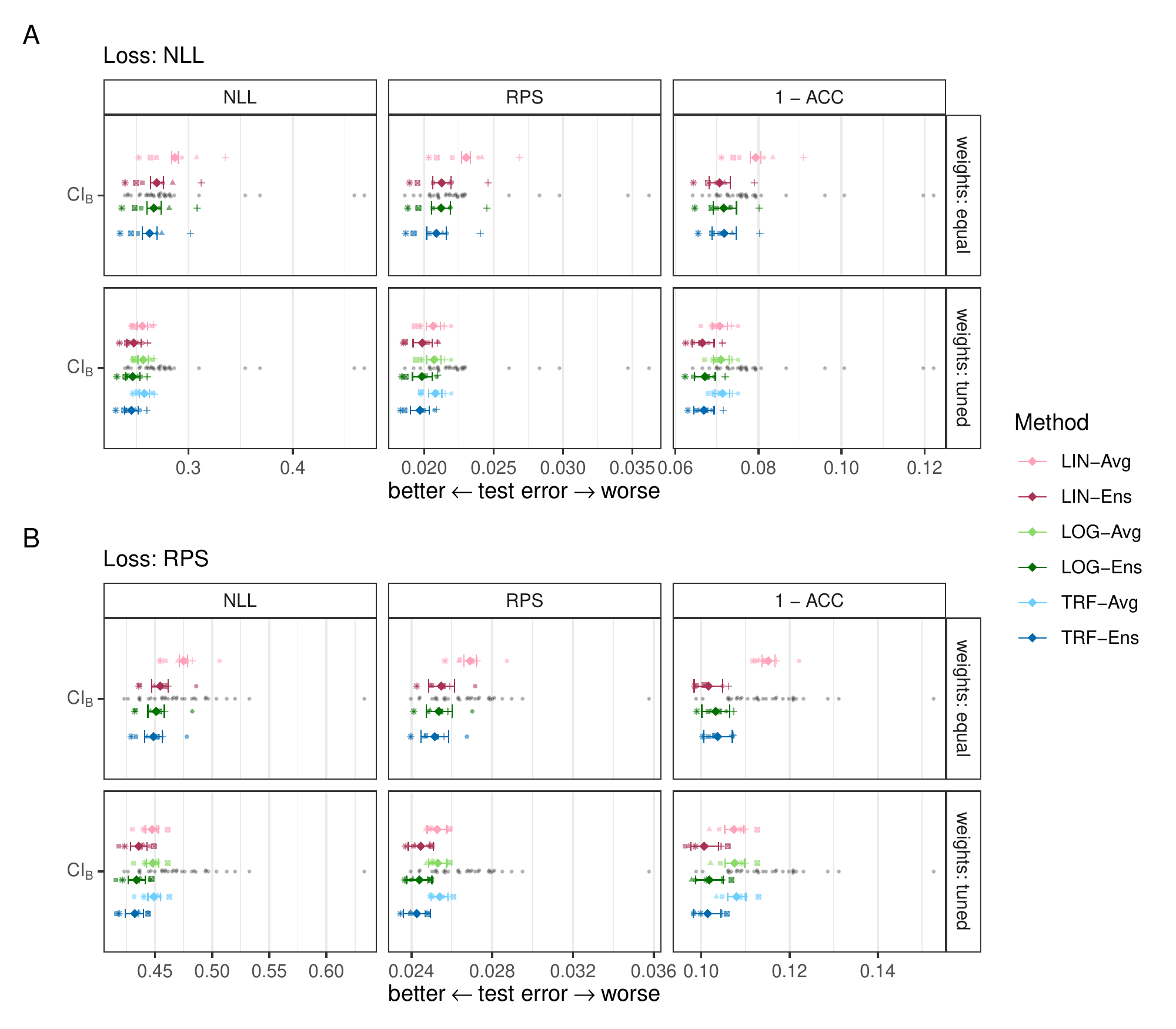}
    \caption{Performance estimates on the MNIST data set. Estimates of individual 
    models are shown as gray dots.
    }
    \label{fig:mnist:indiv}
\end{figure}
\begin{figure}[!ht]
    \centering
    \includegraphics[width=0.7\textwidth]{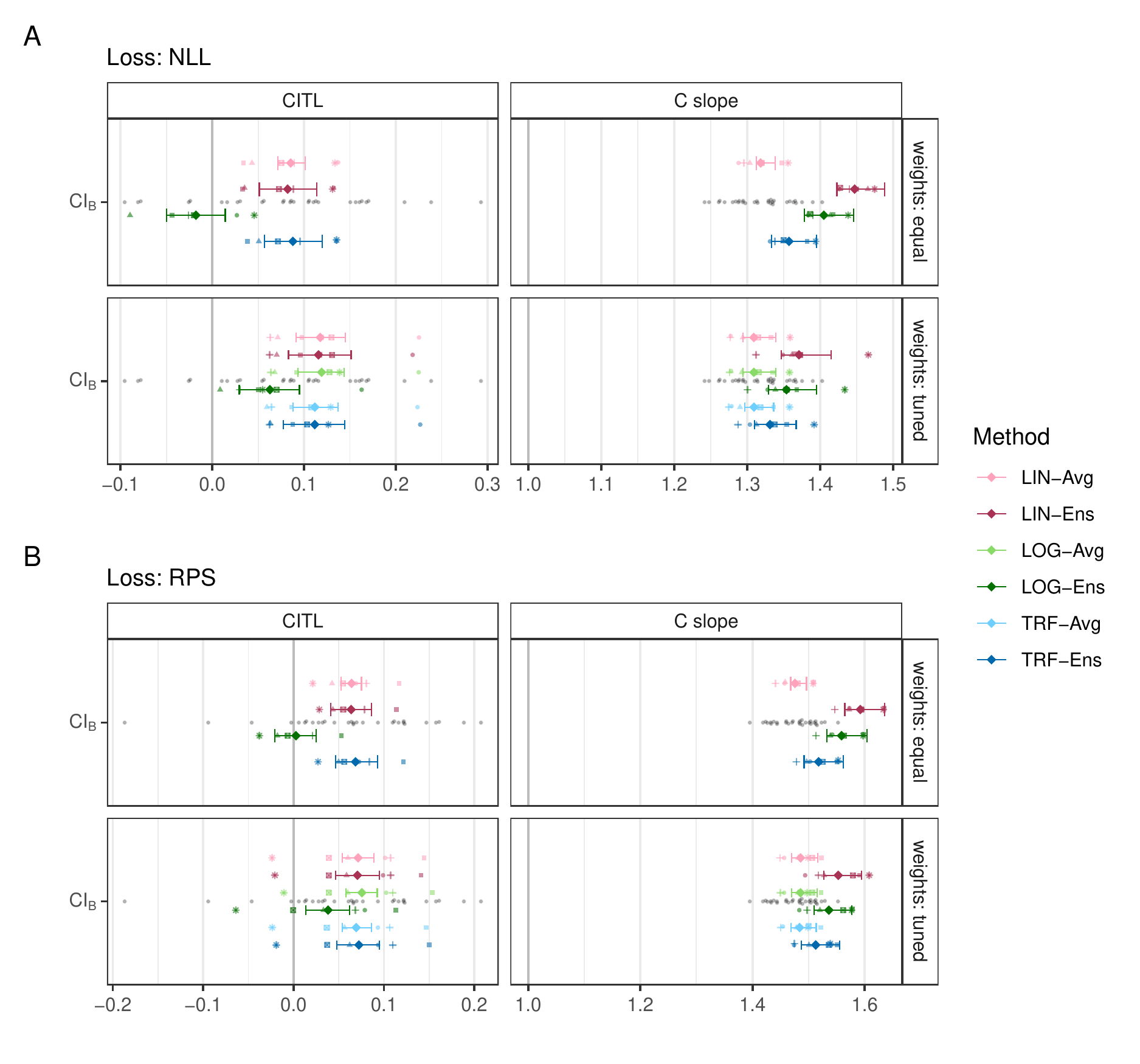}
    \caption{Calibration estimates on the MNIST data set. Calibration-in-the-large (CITL)
    and calibration slope (C slope) of the classical linear (LIN-Ens), classical
    log-linear (LOG-Ens), and transformation (TRF-Ens) ensemble.
    }
    \label{fig:mnist:cal:indiv}
\end{figure}

\clearpage
\subsection{Melanoma}
Fig.~\ref{fig:mela:rel} shows model performance relative to the \silsx{} model.
Conclusions from the main text remain unaltered; even when removing the between-split
variation, variance reduces upon tuning ensemble weights. Performance of individual
models is again shown in Fig.~\ref{fig:mela:indiv}, and calibration metrics in
Fig.~\ref{fig:mela:cal:indiv}. Conclusions from the main text remain the same.
However, how the extreme miscalibration of some members is mitigated by ensembling
can now be seen more clearly.
\begin{figure}[!ht]
    \centering
    \includegraphics[width=0.97\textwidth]{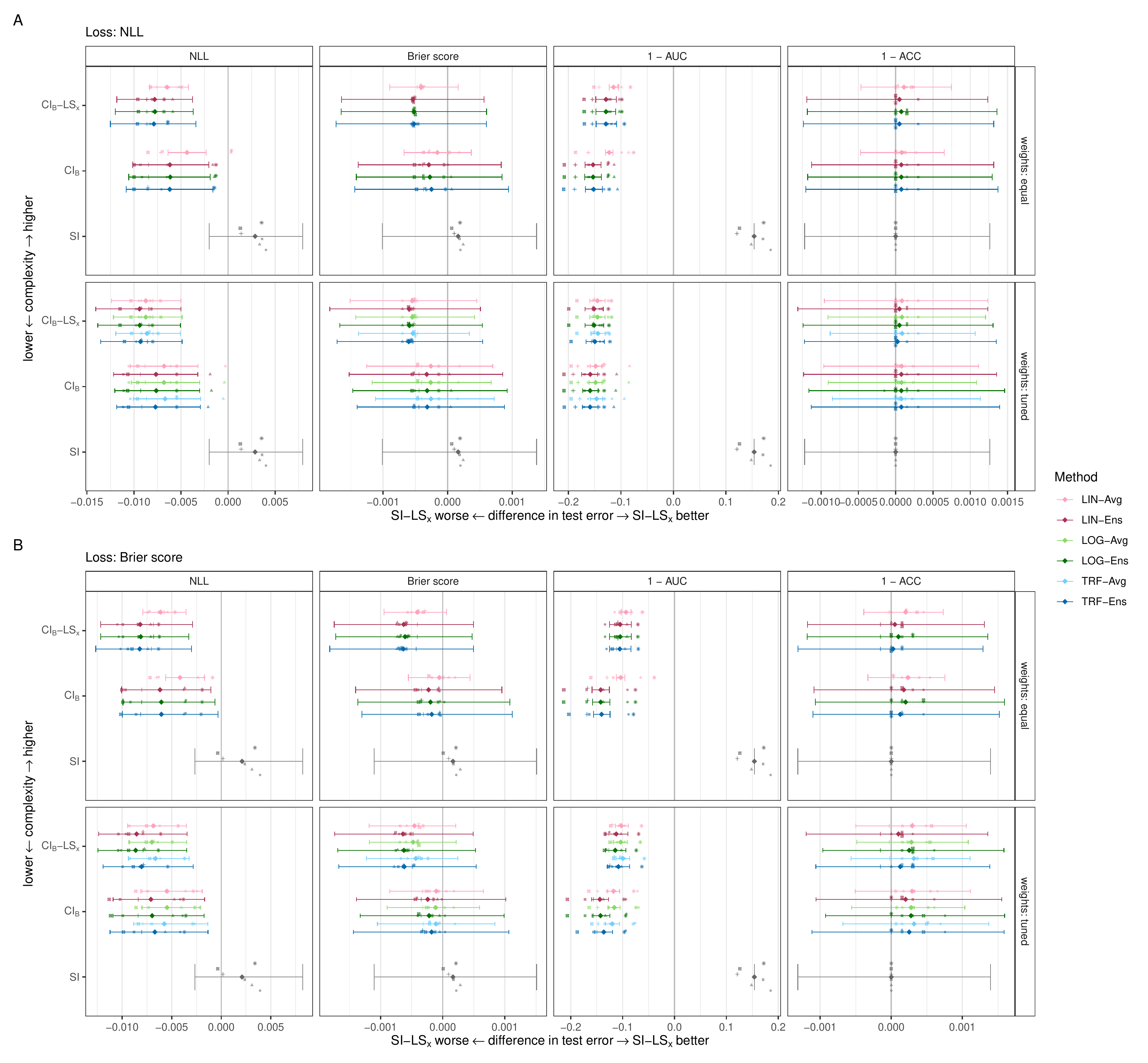}
    \caption{Difference in test error of all models fitted on the melanoma 
    data to the test error of the benchmark \silsx{} model.
    }
    \label{fig:mela:rel}
\end{figure}
\begin{figure}[!ht]
    \centering
    \includegraphics[width=0.97\textwidth]{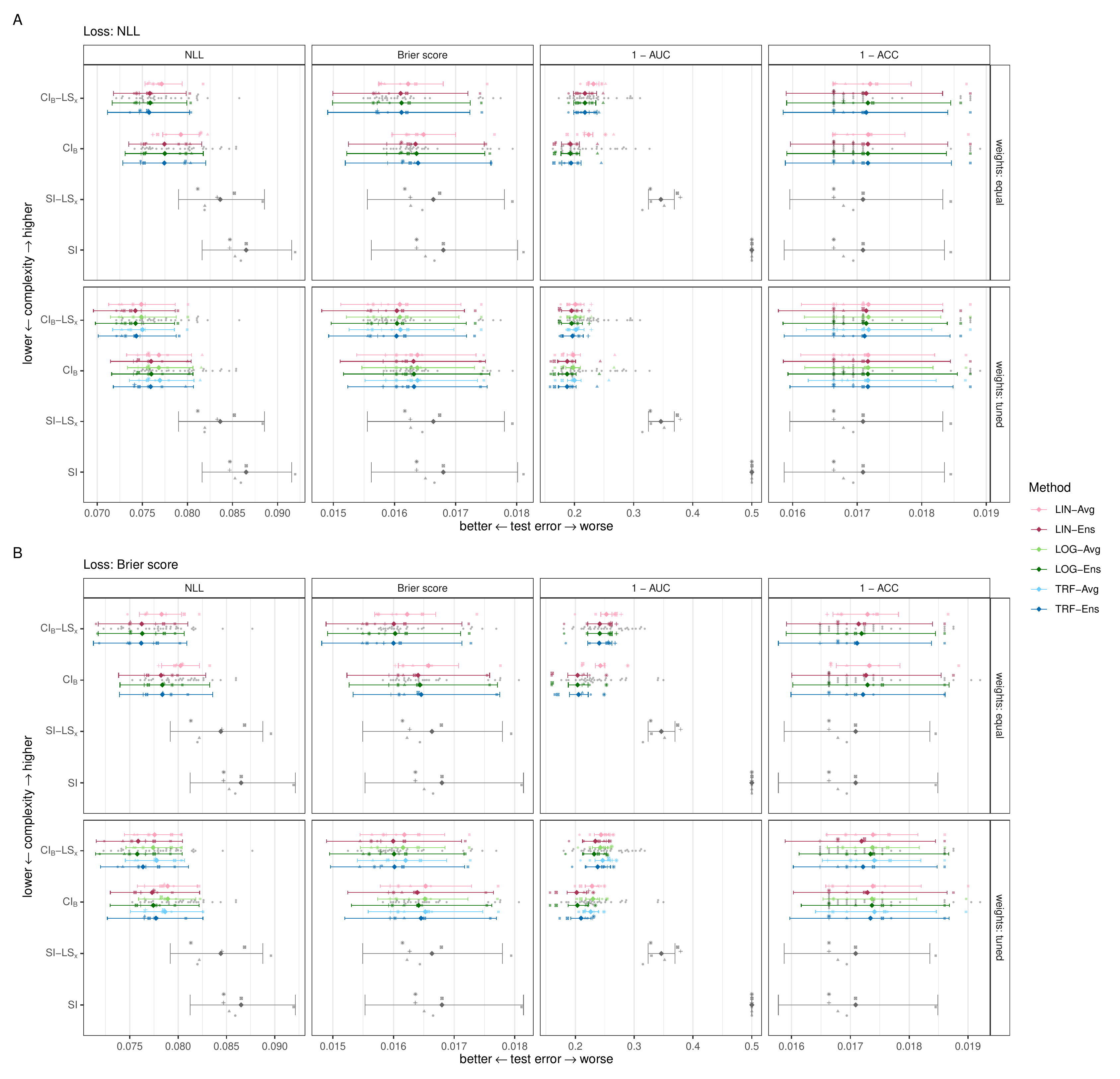}
    \caption{Performance estimates on the melanoma data set. Performance estimates 
    of individual models are shown as gray dots.
    }
    \label{fig:mela:indiv}
\end{figure}
\begin{figure}[!ht]
    \centering
    \includegraphics[width=0.7\textwidth]{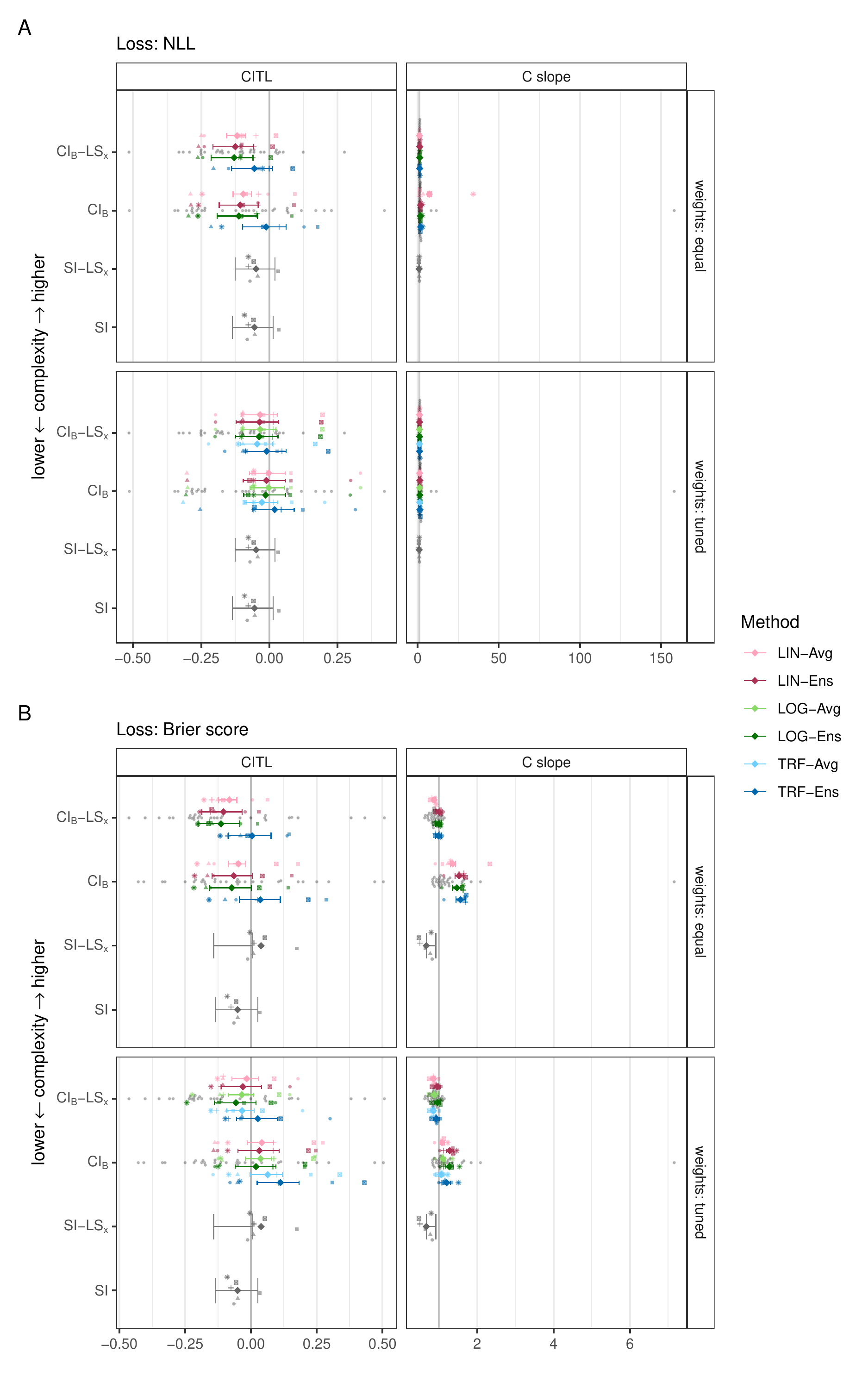}
    \caption{Calibration estimates on the melanoma data set. Calibration-in-the-large (CITL)
    and calibration slope (C slope) of the classical linear (LIN-Ens), classical
    log-linear (LOG-Ens), and transformation (TRF-Ens) ensemble. Estimates of individual
    models are shown as gray dots.
    }
    \label{fig:mela:cal:indiv}
\end{figure}

\clearpage
\subsection{UTKFace}
Fig.~\ref{fig:utkface:rel} shows model performance relative to the \silsx{} model.
Conclusions from the main text remain unaltered; even when removing the between-split
variation, variance reduces slightly upon tuning ensemble weights. Performance of 
individual models is again shown in Fig.~\ref{fig:utkface:indiv}, and calibration metrics 
in Fig.~\ref{fig:utkface:cal:indiv}.
Ensembles of the \cib{} model mitigate stark prediction errors in the individual models.
Calibration is better for the \cib{} and \ciblsx{} models, the classical log-linear 
ensemble yields the most well-calibrated predictions for those.
\begin{figure}[!ht]
    \centering
    \includegraphics[width=0.85\textwidth]{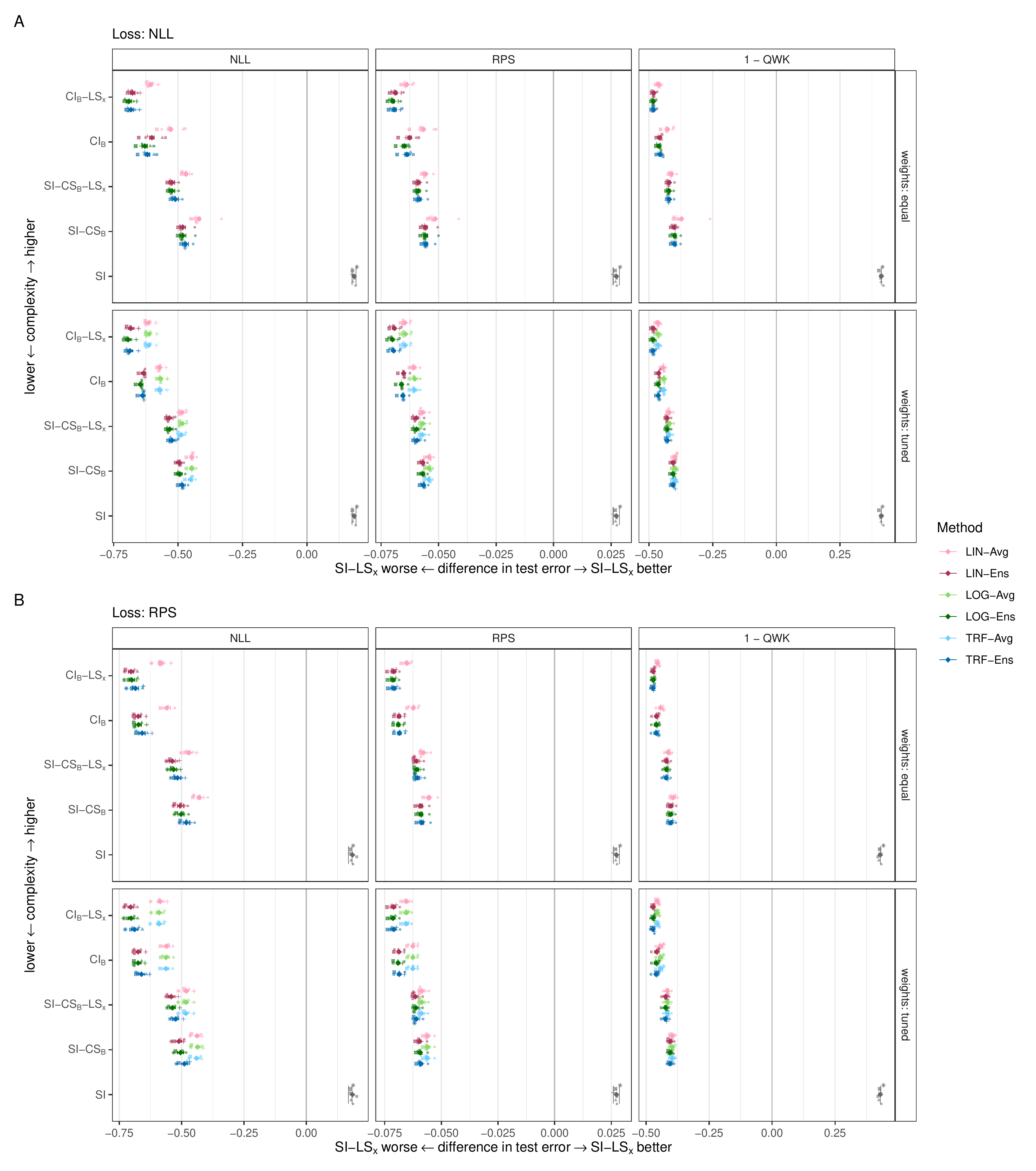}
    \caption{Difference in test error of all models fitted on the UTKFace 
    data to the test error of the benchmark \silsx{} model.
    }
    \label{fig:utkface:rel}
\end{figure}
\begin{figure}[!ht]
    \centering
    \includegraphics[width=0.9\textwidth]{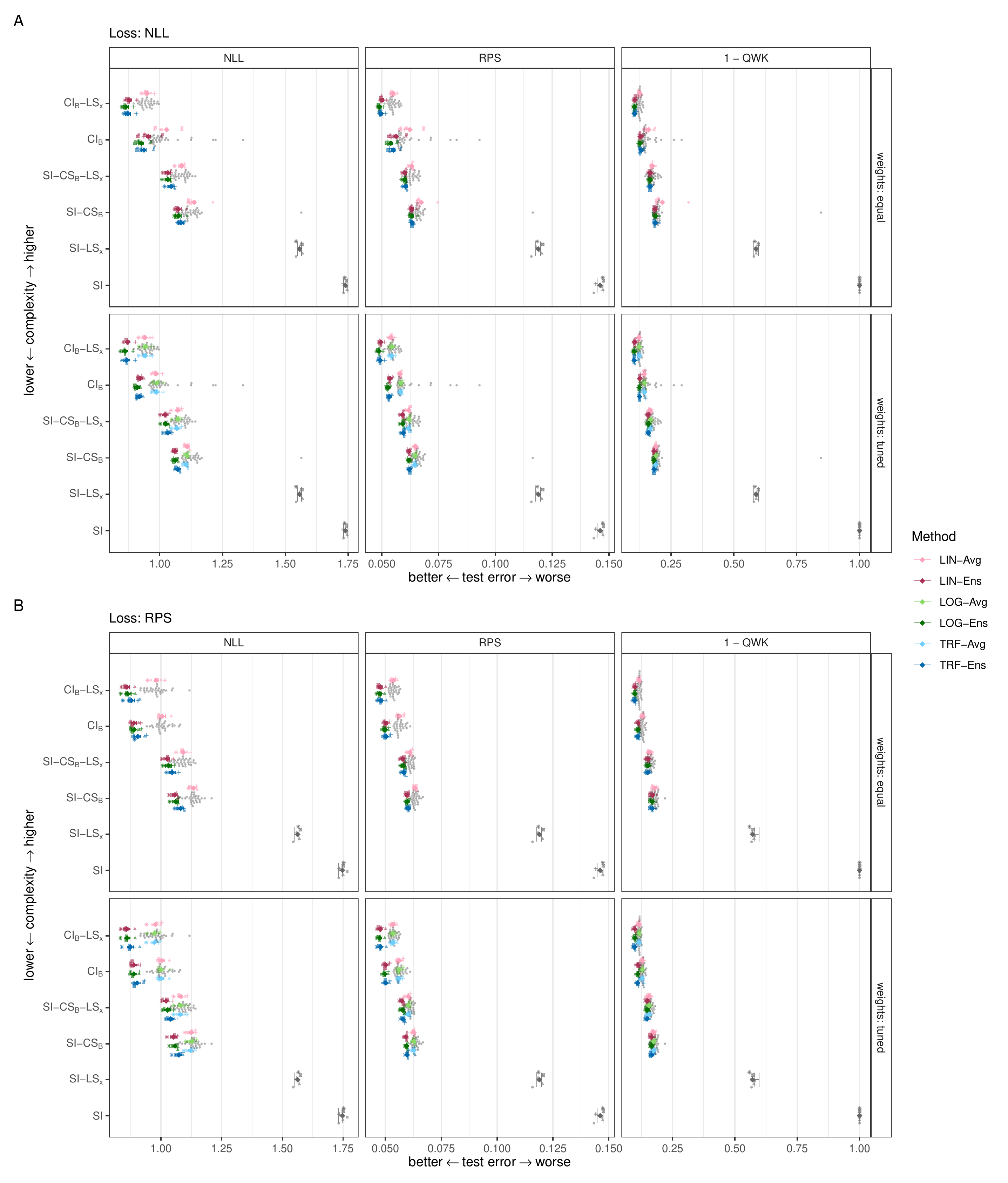}
    \caption{Performance estimates on the UTKFace data set. Estimates of
    individual models are shown as gray dots.
    }
    \label{fig:utkface:indiv}
\end{figure}
\begin{figure}[!ht]
    \centering
    \includegraphics[width=0.7\textwidth]{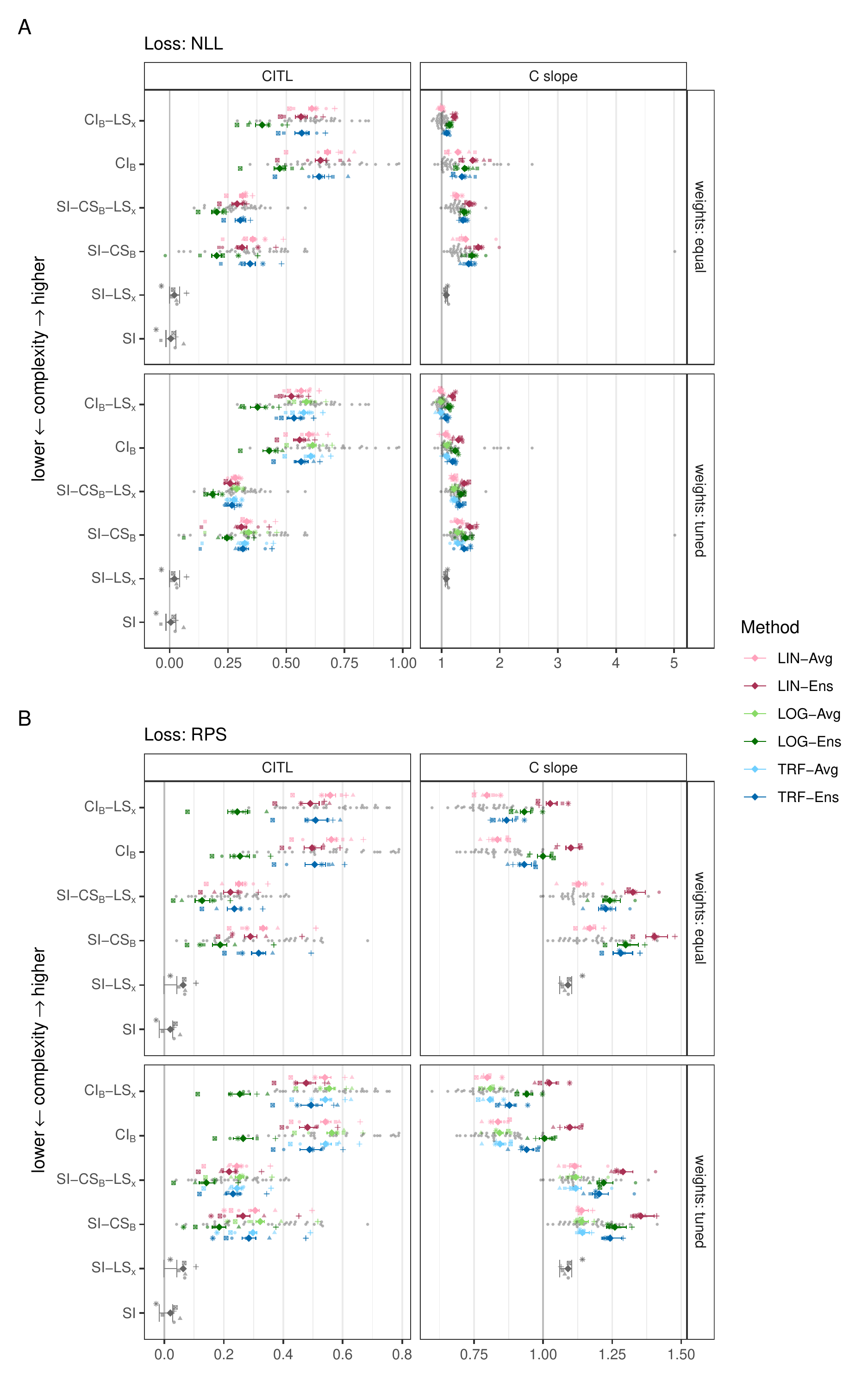}
    \caption{Calibration estimates on the UTKFace data set. Calibration-in-the-large (CITL)
    and calibration slope (C slope) of the classical linear (LIN-Ens), classical
    log-linear (LOG-Ens), and transformation (TRF-Ens) ensemble. Metrics of individual
    models are shown as gray dots.
    }
    \label{fig:utkface:cal:indiv}
\end{figure}
\begin{figure}[!ht]
    \centering
    \includegraphics[width=0.75\textwidth]{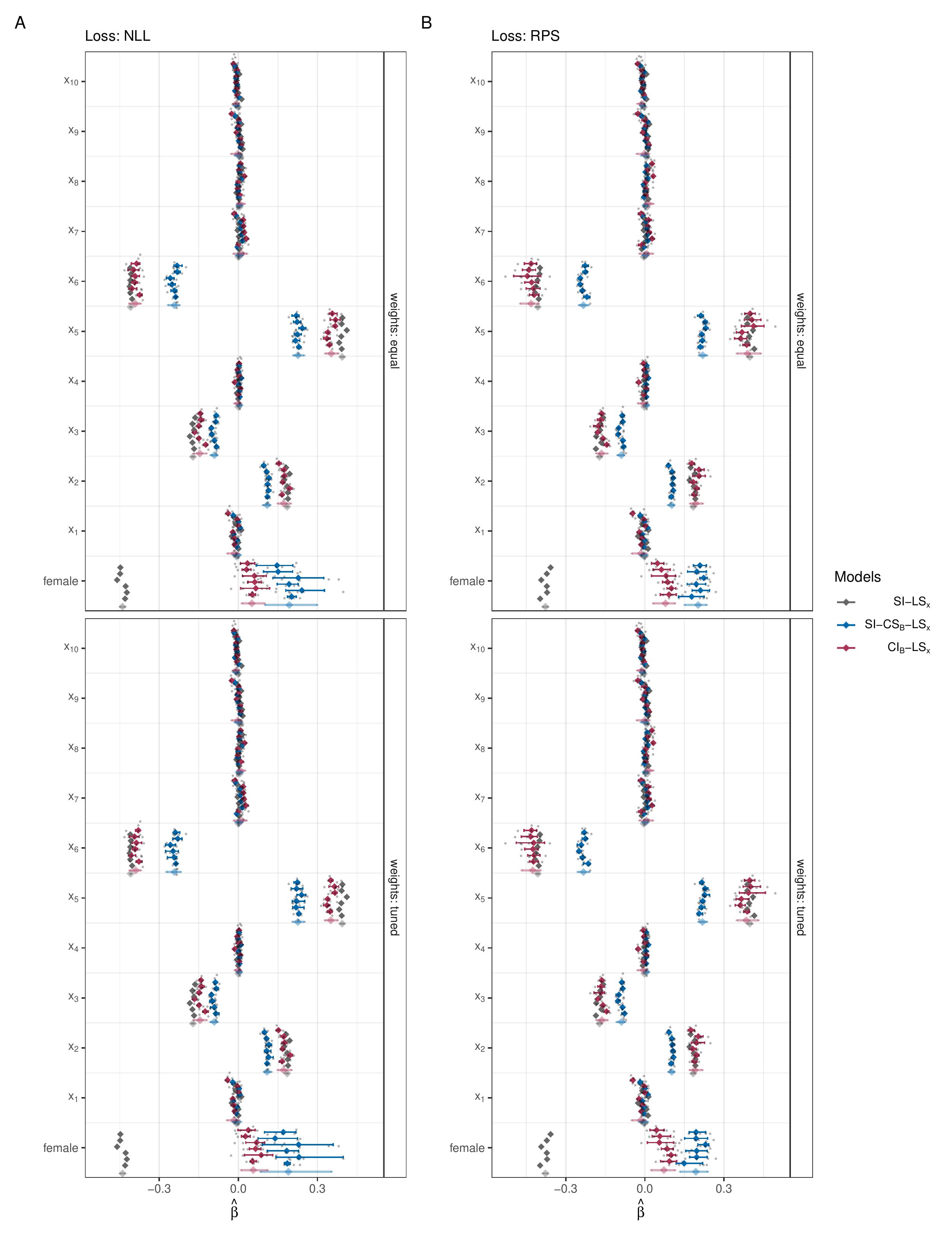}
    \caption{Coefficient estimates on the UTKFace data set. Average log 
    odds-ratios ($\hat\shiftparm$) and 95\% bootstrap confidence 
    intervals of belonging to a higher age class when increasing one of the 10 
    simulated predictors by one unit or switching from male to female for each 
    random split. The average log odds-ratio across splits is shown as transparent 
    dot for each model. Individual odds ratios of all $5 \times 6$ ensemble members or 
    six splits for the \silsx{} model, respectively, are shown as gray dots.
    In the upper panels ensemble members were equally weighted 
    for constructing the ensemble and in the lower panels weights were tuned
    to minimize validation loss. 
    Models were trained by minimizing NLL (\textsf{A}) or RPS (\textsf{B}).
    }
    \label{fig:utkface:or}
\end{figure}

\end{document}